%% file: main.tex
\documentclass[lettersize,journal]{IEEEtran}
\usepackage{amsmath,amsfonts}
\usepackage{algorithmic}
\usepackage{algorithm}
\usepackage{array}
\usepackage[caption=false,font=normalsize,labelfont=sf,textfont=sf]{subfig}
\usepackage{textcomp}
\usepackage{stfloats}
\usepackage{url}
\usepackage{verbatim}
\usepackage{graphicx}
\usepackage{cite}
\usepackage{xcolor}
\usepackage{multirow}
\usepackage{booktabs}
\usepackage{tabularx}
\usepackage{adjustbox}
\usepackage{hyperref}
\usepackage{amsthm}
\usepackage{float}
\usepackage{balance}
\usepackage{xcolor}

\begin{document}

\title{Gated Adaptation for Continual Learning in Human Activity Recognition}

\author{Reza Rahimi Azghan$^{\star}$, Gautham Krishna Gudur$^{\dagger}$, Mohit Malu$^{\star}$ \\Edison Thomaz$^{\dagger}$, Giulia Pedrielli$^{\star}$, Pavan Turaga$^{\star}$, Hassan Ghasemzadeh$^{\star}$
\thanks{$^{\star}$Arizona State University, Phoenix, AZ, USA,\vfill$^{\dagger}$ The University of Texas at Austin, Austin, TX, USA}%
\thanks{Email: \{rrahimia, hassan.ghasemzadeh, pavan.turaga, giulia.pedrielli, mmalu\}@asu.edu, \{ethomaz, gauthamkrishna\}@utexas.edu }%
\thanks{This work was supported in part by the National Science Foundation under
grant IIS-2402650. Any opinions, findings, conclusions, or recommendations expressed
in this material are those of the authors and do not necessarily reflect the
views of the funding organization.}
\thanks{Copyright (c) 20xx IEEE. Personal use of this material is permitted. However, permission to use this material for any other purposes must be obtained from the IEEE by sending a request to \href{pubs-permissions@ieee.org}{pubs-permissions@ieee.org}.}
}

% The paper headers
% \markboth{Journal of \LaTeX\ Class Files,~Vol.~14, No.~8, August~2021}%
% {Shell \MakeLowercase{\textit{et al.}}: A Sample Article Using IEEEtran.cls for IEEE Journals}

% \IEEEpubid{0000--0000/00\$00.00~\copyright~2021 IEEE}
% Remember, if you use this you must call \IEEEpubidadjcol in the second
% column for its text to clear the IEEEpubid mark.

\maketitle

\input{Sections/00_abstract}
\input{Sections/01_introduction}
\input{Sections/02_related_work}
\input{Sections/03_proposed_method}

\input{Sections/04_theoretical_insights}

\input{Sections/05_experimntal_setup}
\input{Sections/06_experimental_results}
\input{Sections/07_conclusion}

\bibliographystyle{IEEEtran}
\bibliography{refs}

\end{document}

%% file: Sections/00_abstract.tex
\begin{abstract}
Wearable sensors in Internet of Things (IoT) ecosystems increasingly support applications such as remote health monitoring, elderly care, and smart home automation, all of which rely on robust human activity recognition (HAR). Continual learning systems must balance plasticity (the ability to learn new tasks) with stability (the retention of previously acquired knowledge). However, AI models often exhibit catastrophic forgetting, where learning new tasks degrades performance on earlier ones. This challenge is particularly acute in domain-incremental settings such as HAR, where on-device models must adapt to new subjects with distinct movement characteristics while maintaining accuracy on previously seen subjects without transmitting sensitive data to the cloud. In this work, we propose a parameter-efficient continual learning framework based on channel-wise gated modulation applied to frozen pretrained representations. Our key insight is that adaptation should operate through feature \emph{selection} rather than feature \emph{generation}: by restricting learned transformations to diagonal scaling of existing features, we preserve the geometric structure of pretrained representations while enabling subject-specific modulation. We provide a theoretical analysis showing that gating implements a bounded diagonal operator that limits representational drift compared to unconstrained linear transformations. Empirically, we demonstrate that freezing the backbone substantially reduces forgetting and that lightweight gates restore the adaptation capacity lost from freezing, achieving both stability and plasticity simultaneously. On the PAMAP2 dataset with 8 sequential subjects, our approach reduces forgetting from 39.7\% (trainable backbone) to 16.2\% and improves final accuracy from 56.7\% to 77.7\%, while training less than 2\% of model parameters. Our method matches or exceeds standard continual learning baselines without requiring replay buffers or task-specific regularization, confirming that structured diagonal operators provide an effective and efficient mechanism for continual learning under distribution shift.\looseness=-1
\end{abstract}

\textbf{Index Terms}—Continual learning, catastrophic forgetting, parameter-efficient fine-tuning, human activity recognition, wearable sensors, gated neural networks, domain-incremental learning.\looseness=-1

%% file: Sections/01_introduction.tex
\section{Introduction}

Adapting to a dynamic environment is one of the fundamental attributes of intelligent systems. Human beings, as the most capable example of intelligence in the natural world, have developed this ability through years of evolution. They are able to learn new concepts and acquire new knowledge without forgetting what they have previously learned. In artificial intelligence, this capability is studied under the framework of continual learning~\cite{PARISI201954}, which refers to models that incrementally learn from a continuous stream of data and tasks over time, adapt to changing environments without retraining from scratch, and, critically, retain knowledge acquired from earlier experiences. To support such behavior, continual learning systems must address the well-known stability–plasticity dilemma~\cite{mccloskey1989catastrophic}, where models need to remain sufficiently plastic to learn new tasks while also being stable enough to preserve previously learned information.\looseness=-1

In practice, learning systems often favor plasticity over stability, which leads to a phenomenon known as catastrophic forgetting~\cite{FRENCH1999128}, where performance on previously learned tasks deteriorates as new tasks are introduced. This issue becomes more severe in real-world deployments, where models are exposed to large volumes of data originating from multiple, potentially complex and evolving distributions. In these settings, data is commonly organized into tasks, where each task consists of a set of samples generated under consistent conditions. More precisely, a task consists of data points drawn from a shared generative distribution. While different tasks may vary in their data distributions, samples within a task are assumed to exhibit similar statistical properties. Continual learning systems are therefore required to accommodate shifts between such distributions over time while maintaining performance on previously encountered tasks.\looseness=-1

The proliferation of wearable sensors in Internet of Things (IoT) ecosystems has enabled a wide range of applications that depend on robust human activity recognition (HAR), including remote health monitoring~\cite{qi2017advanced, hashemi2025ultra}, elderly care and fall detection~\cite{wang2019survey}, smart home automation~\cite{yao2017deepsense, mohammadibalini2026disparities}, and fitness tracking. In these deployments, HAR models typically execute on resource-constrained edge devices such as smartwatches, fitness bands, or smartphones, where they must operate under strict memory, compute, and energy budgets~\cite{lane2015deepear}. A critical requirement for such systems is the ability to personalize to new users over time: as a wearable device is adopted by different household members or as a telehealth system onboards new patients, the model must adapt to each individual's movement characteristics without forgetting previously learned users.\looseness=-1

Sensor-based HAR~\cite{chen2021deep} provides a natural testbed for studying this challenge, as individual subjects define task boundaries~\cite{sah2022continual}. Each person exhibits distinct movement patterns, physiological characteristics, and sensor placement variations. While activity labels remain constant (e.g., walking, sitting, running), the manner in which these activities are performed varies substantially across subjects, inducing subject-specific data distributions. Importantly, transmitting raw sensor data to cloud servers for centralized retraining raises significant privacy concerns, as movement patterns can reveal sensitive health information and behavioral profiles~\cite{lane2010survey}. This motivates \emph{on-device continual learning}, where models adapt locally to new subjects without requiring data transmission. Figure~\ref{fig:forgetting_intro} illustrates the severity of this challenge on the PAMAP2 dataset~\cite{pamap2_physical_activity_monitoring_231}, where a multilayer perceptron model suffers catastrophic forgetting: after training on only four subjects, the accuracy on the first subject drops from 85\% to 40\%, a decline of 45 percentage points.\looseness=-1

\begin{figure}
    \centering
    \includegraphics[width=\linewidth]{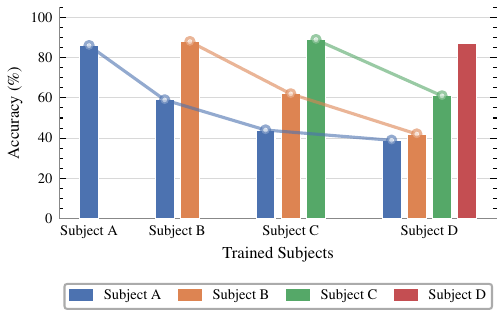}
    \caption{Catastrophic forgetting in subject-incremental HAR: accuracy on Subject 1 drops from 85\% to 40\% after training on just three additional subjects (PAMAP2 dataset).}
    \label{fig:forgetting_intro}
\end{figure}

To address catastrophic forgetting, three main paradigms have emerged in the continual learning literature. \emph{Regularization-based methods}~\cite{kirkpatrick2017ewc, zenke2017continual} constrain parameter updates by penalizing changes to weights deemed important for previous tasks, typically through Fisher information or importance estimates. \emph{Replay-based methods}~\cite{rebuffi2017icarl, buzzega2020darkexperiencegeneralcontinual} store or generate samples from prior tasks to maintain performance through periodic rehearsal. \emph{Architectural methods}~\cite{rusu2016progressive, serrà2018overcomingcatastrophicforgettinghard} expand model capacity or selectively activate subnetworks for different tasks. While these approaches have demonstrated effectiveness in controlled settings, they face practical limitations for IoT deployment: regularization methods can be overly conservative and require storing per-task importance weights, replay methods raise privacy concerns and demand persistent storage of sensitive sensor data, and architectural expansion increases model size beyond the memory constraints of edge devices.\looseness=-1

More recently, \emph{frozen backbone} strategies have gained traction~\cite{wang2022l2p, wang2022dualprompt}, where a pretrained feature extractor remains fixed while only task-specific components are trained. This paradigm offers computational efficiency and inherent stability, as the shared representation cannot be corrupted by new task updates. However, a critical question remains: how should adaptation occur on top of a frozen backbone? Common approaches include training only the final classifier or adding trainable adapter layers~\cite{houlsby2019adapters}. The former provides minimal adaptation capacity, often insufficient for significant distribution shifts, while the latter introduces dense transformations that can generate entirely new feature spaces, undermining the stability benefits of the frozen backbone. This motivates our focus on structured, low-capacity adaptation mechanisms that balance expressiveness with stability.\looseness=-1

Within the context of parameter-efficient fine-tuning, we propose the use of gated modulation mechanisms to selectively adapt intermediate representations of a neural network. Specifically, we introduce lightweight channel-wise gates that operate on the outputs of intermediate layers and modulate feature activations based on the input. The proposed gating mechanism is inspired by the design principles of squeeze-and-excitation networks~\cite{hu2018senet}, but is employed here as a means of continual adaptation rather than static feature recalibration. By placing these gates on top of a frozen pretrained backbone, the model is able to adjust the relative importance of learned features while preserving the stability of the underlying representation.\looseness=-1

Our analysis demonstrates that this form of structured, low-capacity adaptation reduces catastrophic forgetting compared to fully fine-tuning all model parameters, as updates are constrained to bounded, task-adaptive gates and a shared classifier. Moreover, in contrast to approaches that introduce additional trainable layers on top of frozen backbones, gated modulation provides a more controlled form of adaptation by restricting changes to channel-wise reweighting rather than arbitrary feature transformations. We support these claims through both theoretical analysis and extensive empirical evaluation on multiple sensor-based human activity recognition datasets.\looseness=-1

In summary, the contributions of this paper are as follows:\looseness=-1
\begin{itemize}
\item We propose a parameter-efficient continual learning framework that employs channel-wise gated modulation to adapt frozen pretrained neural representations under subject-induced distribution shifts. The approach requires less than 2\% of trainable parameters compared to full fine-tuning, making it suitable for deployment on resource-constrained IoT edge devices.\looseness=-1

\item We provide a theoretical analysis demonstrating that gated adaptation implements a diagonal operator. This mechanism bounds representational drift through structured channel-wise reweighting, limiting interference with previously learned tasks.\looseness=-1

\item We conduct extensive empirical evaluations across three sensor-based HAR benchmarks (PAMAP2, UCI-HAR, DSA) under domain-incremental settings with up to 30 sequential subjects, demonstrating consistent improvements in the stability-plasticity tradeoff. Our approach substantially outperforms both minimal adaptation (frozen backbone with classifier only) and high-capacity adaptation (stacked trainable layers) across all datasets, validating our theoretical predictions.\looseness=-1
\end{itemize}

%% file: Sections/02_related_work.tex
\section{Related Work}

Our work sits at the intersection of continual learning, parameter-efficient fine-tuning, attention mechanisms, and human activity recognition. We review each area and identify the specific gap our approach addresses.

\subsection{Continual Learning and Catastrophic Forgetting}

Continual learning addresses the challenge of training neural networks on sequential tasks without forgetting previously acquired knowledge, a phenomenon known as catastrophic forgetting \cite{kirkpatrick2017ewc, mccloskey1989catastrophic}. Three main paradigms have emerged to address this challenge. \textit{Regularization-based methods} constrain parameter updates to preserve important weights: Elastic Weight Consolidation (EWC) \cite{kirkpatrick2017ewc} penalizes changes to parameters deemed important for prior tasks via the Fisher information matrix, while Learning without Forgetting (LwF) \cite{li2017lwf} uses knowledge distillation to maintain output consistency. \textit{Replay-based methods} store or generate exemplars from previous tasks \cite{rebuffi2017icarl, shin2017dgr}, though this raises privacy and memory concerns. \textit{Architecture-based methods} allocate task-specific parameters \cite{rusu2016progressive, mallya2018packnet}, but often incur growing model complexity. Recent comprehensive surveys \cite{wang2024clsurvey, zhou2024cilsurvey, vandeven2024clbook} highlight that while significant progress has been made, existing approaches involve trade-offs between stability (retaining old knowledge) and plasticity (learning new tasks). Our work contributes to the architecture-based paradigm by introducing lightweight, bounded gating mechanisms that modulate frozen representations.\looseness=-1

\subsection{Parameter-Efficient Fine-Tuning for Continual Learning}

The rise of large pretrained models has motivated parameter-efficient fine-tuning (PEFT) methods that adapt models by updating only a small subset of parameters while keeping the backbone frozen \cite{houlsby2019adapters, hu2022lora}. This frozen backbone paradigm provides inherent stability against forgetting, as the core representations remain unchanged. Recent work has explicitly connected PEFT with continual learning: Learning to Prompt (L2P) \cite{wang2022l2p} maintains a learnable prompt pool to dynamically instruct a frozen Vision Transformer across sequential tasks without rehearsal. DualPrompt \cite{wang2022dualprompt} extends this with task-invariant and task-specific prompt components, while CODA-Prompt \cite{smith2023codaprompt} introduces end-to-end attention-based prompt assembly. A recent survey on pretrained model-based continual learning \cite{zhou2024ptmclsurvey} systematically analyzes these approaches, noting their success in rehearsal-free settings. However, critical analysis \cite{thede2024reflecting} reveals that simple baselines with frozen backbones can be surprisingly competitive, questioning the necessity of complex prompt mechanisms.

\subsection{Gating and Channel Attention Mechanisms}

Squeeze-and-Excitation Networks (SE-Nets) \cite{hu2018senet} introduced channel-wise recalibration, learning to weight feature channels based on global context via a squeeze-excitation block. This mechanism has been extended to spatial attention in CBAM \cite{woo2018cbam} and applied to feature conditioning in FiLM \cite{perez2018film}. In the continual learning context, Masse et al.~\cite{masse2018gating} demonstrated that context-dependent gating inspired by neuroscience can alleviate forgetting by selectively activating subnetworks. For human activity recognition specifically, SE-blocks have been integrated into deep residual networks for wearable sensor data \cite{mekruksavanich2022senethar}, showing improved recognition accuracy through channel attention. Our work is inspired by SE-Nets but applies gating in a fundamentally different context: rather than learning static attention weights, we train gates that modulate a frozen pretrained backbone for domain-incremental adaptation. This provides bounded feature drift while maintaining representational stability.

\subsection{Continual Learning for Human Activity Recognition}

Human activity recognition (HAR) from wearable sensors presents a natural testbed for continual learning, as deployed systems must adapt to new users, devices, and environments over time \cite{chen2022clhar, jha2021harbenchmark, ExtendedTemporalTasks}. Subject-wise variation, where different individuals perform the same activity with distinct motion patterns, induces domain shifts that cause catastrophic forgetting when models are transferred across users. Jha et al.~\cite{jha2021harbenchmark} provided an empirical benchmark evaluating standard continual learning techniques (EWC, LwF, replay) on HAR datasets, finding that replay-based methods generally outperform regularization approaches but at the cost of memory overhead. Recent advances include online continual learning frameworks \cite{schiemer2023oclhar} that handle streaming sensor data, prototype-based approaches \cite{adaimi2022lapnethar, farahmand2025attenglucomultimodaltransformerbasedblood} using experience replay with continual prototype evolution, and self-supervised methods that leverage unlabeled data \cite{bock2024casslehar}. Most recently, CLAD-Net \cite{azghan2025cladnetcontinualactivityrecognition} combines self-supervised representation learning with knowledge distillation for domain-incremental HAR.

In addition to these efforts, on-device continual learning has also been explored for resource-constrained platforms. HARNet \cite{sundaramoorthy2018harnet} proposes a lightweight deep ensemble architecture designed for incremental learning directly on constrained mobile devices. By enabling models to adapt to new activities without retraining from scratch and without relying on cloud resources, HARNet demonstrates that continual adaptation can be achieved efficiently under strict computational and memory budgets. This line of work highlights the growing need for continual learning systems that remain practical in real-world, embedded HAR deployments.

Recent research has also examined lifelong adaptation through metric-based learning. Adaimi and Thomaz \cite{adaimi2022lapnethar} introduced a prototypical-network-based framework that incrementally updates class prototypes to support lifelong sensor-based activity recognition. Their approach reduces catastrophic forgetting by anchoring new representations around stable prototypes while enabling rapid adaptation to additional classes or users. This prototypical perspective complements architecture- and replay-based approaches by emphasizing representation stability during incremental updates.

\subsection{Summary and Research Gap}

While frozen backbone approaches with PEFT have shown promise for continual learning, existing methods rely on additive mechanisms (prompts, adapters) that lack theoretical guarantees on representation drift. Simultaneously, gating mechanisms have proven effective for channel recalibration but have not been systematically applied to frozen backbones in domain-incremental settings. In HAR specifically, continual learning research has focused on applying general-purpose methods without leveraging the channel-wise structure of subject-induced domain shifts. Our work addresses this gap by combining a frozen pretrained backbone with lightweight, channel-wise gates, providing both theoretical bounds on feature drift and empirical improvements in knowledge retention.\looseness=-1

Although our gates share the parameter-efficient, frozen-backbone philosophy of prompt- and adapter-based methods \mbox{\cite{wang2022l2p, wang2022dualprompt, houlsby2019adapters, hu2022lora}}, they differ in a fundamental way. Adapters and prompts are additive: they introduce new parameters that generate feature subspaces outside the pretrained representation, and the resulting drift is unconstrained, since the injected transformation can in principle move features arbitrarily. Our gates are instead multiplicative and diagonal: they reweight existing channels rather than synthesize new ones, which bounds representational drift by construction (Theorem~1) and underlies our stability guarantees. Our approach also differs from masking-based methods such as HAT \mbox{\cite{serrà2018overcomingcatastrophicforgettinghard}} and PackNet \mbox{\cite{mallya2018packnet}}. These partition network capacity by learning hard, typically binary masks that allocate disjoint parameter subsets per task, requiring per-task masks and explicit task identity. In contrast, our gates apply continuous channel-wise scaling to a single shared representation, store one set of parameters regardless of the number of tasks, and operate without task boundaries at test time. In short, prior methods adapt through feature generation or capacity partitioning, whereas our method adapts through bounded feature selection

%% file: Sections/03_proposed_method.tex
\section{Proposed Approach}

\begin{figure*}
    \centering
    \includegraphics[width=\linewidth]{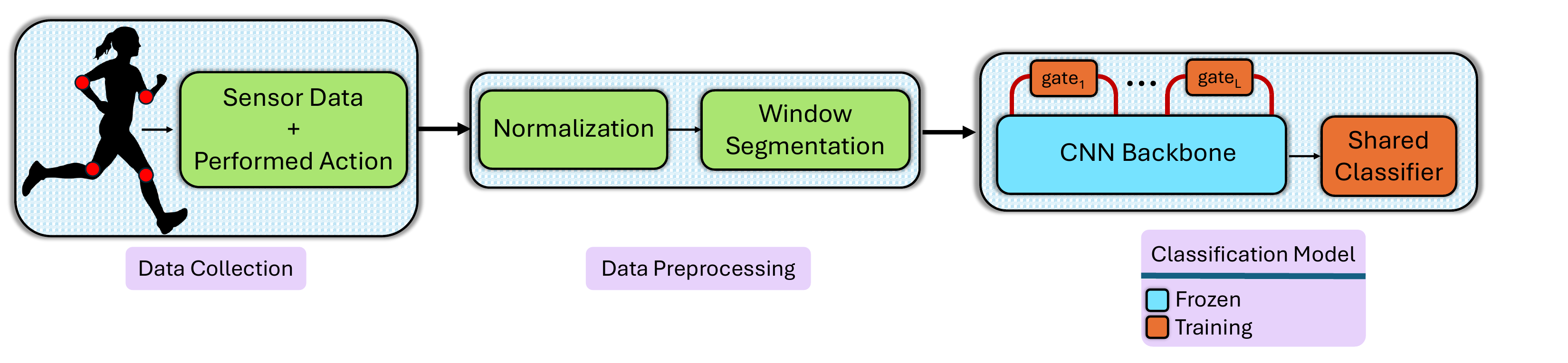}
    \caption{Overview of the proposed continual learning system for HAR, comprising data collection, preprocessing, and a classification model with a frozen pretrained backbone modulated by lightweight trainable gates.}
    \label{fig:system}
\end{figure*}

\subsection{System Overview}
This section presents our proposed system, structured as a pipeline from raw sensor reading to continual model adaptation. As illustrated in Figure \ref{fig:system}, the system comprises three core components.\looseness=-1

During data collection, inertial measurement units (IMUs) are placed on multiple body locations while subjects perform a predefined set of activities. Subjects provide activity labels indicating the action performed at each time instant, enabling supervised training. Next, raw sensor streams from each subject undergo per-subject normalization to account for individual differences in sensor placement and motion intensity. A sliding window segmentation algorithm then partitions the continuous data into fixed-length segments. Each window is represented as a tuple, $(\mathbf{x}, y, t)$, which represents the normalized input data, the activity associated with the input, and the user ID from whom the data was collected.\looseness=-1

The final component is a continual learning model designed to classify segmented windows into activity categories. The central objective is to achieve high recognition accuracy on incoming subjects (plasticity) while preserving performance on previously encountered subjects (stability). To address this stability-plasticity trade-off, we introduce lightweight trainable gates that modulate a pretrained backbone's representations to accommodate subject-specific domain shifts. The architecture and its associated training objectives are detailed in the following subsections.\looseness=-1

\subsection{Problem Formulation}

We formulate continual learning for human activity recognition as a \emph{domain-incremental} learning problem. The learner encounters a sequence of $T$ tasks, where each task $t \in \{1,\dots,T\}$ corresponds to a distinct data distribution representing a new subject. At task $t$, the model receives training data,\looseness=-1
\[
\mathcal{D}_{t}=\{(\mathbf{x}_{i,t}, y_{i,t})\}_{i=1}^{N_t},
\]
where $\mathbf{x}_{i,t}\in \mathbb{R}^{c_0\times d_0}$ is a multivariate time series of length $d_0$ with $c_0$ sensor channels, and $y_{i,t}\in \mathcal{Y}$ is the corresponding activity label from a fixed label set. Each task is drawn from a subject-specific distribution $\mathcal{D}_t \sim \mathbf{P}_{t}(\mathbf{x}, y)$.

Under the domain-incremental assumption, the label distribution remains stationary across tasks,
\[
\mathbf{P}_{t}(y)=\mathbf{P}_{t+1}(y), \quad \forall t\in \{1, \dots, T-1\},
\]
while the marginal distribution over inputs may shift due to subject-specific characteristics,
\[
\mathbf{P}_{t}(\mathbf{x})\neq \mathbf{P}_{t'}(\mathbf{x}), \quad \forall t\neq t'.
\]
This captures the HAR scenario: all subjects perform the same activities (fixed label semantics), but exhibit individual variations in movement patterns, sensor placement, and physiological characteristics (varying input statistics).\looseness=-1

The model is exposed to tasks sequentially and cannot store or revisit data from previous tasks. This reflects memory constraints in embedded and mobile deployment scenarios. The objective is to maintain high classification accuracy on all encountered tasks, including both the current task and all previous tasks, thereby balancing \emph{plasticity} (adaptation to new subjects) with \emph{stability} (retention of knowledge about earlier subjects).\looseness=-1

\subsection{Model Overview}

As illustrated in Figure \ref{fig:system}, our proposed model consists of a frozen pretrained backbone, a set of lightweight channel-wise gating modules inserted after each backbone block, and a shared classifier trained across all tasks.\looseness=-1

\paragraph{Frozen Pretrained Backbone} The backbone comprises $L$ residual convolutional blocks $\{{\phi_\ell}\}_{\ell=1}^L$, each mapping $\phi_{\ell}:\mathbb{R}^{c_{\ell-1} \times d_{\ell-1}} \to \mathbb{R}^{c_{\ell} \times d_{\ell}}$. Given an input sequence $\mathbf{x}$, the output of block $\ell$ is,\looseness=-1

$$U_{\ell}=\phi_{\ell}(H_{\ell-1}), \hspace{1em}H_0=\mathbf{x}$$

These blocks are pretrained on a source HAR dataset and remain fixed throughout continual learning. Freezing the backbone ensures representational stability, as the shared feature extractor does not undergo updates that could degrade performance on earlier subjects. Moreover, this strategy reduces the number of trainable parameters by orders of magnitude, yielding substantial computational efficiency and enabling deployment on lightweight devices.\looseness=-1

\begin{figure*}
    \centering
    \includegraphics[width=\linewidth]{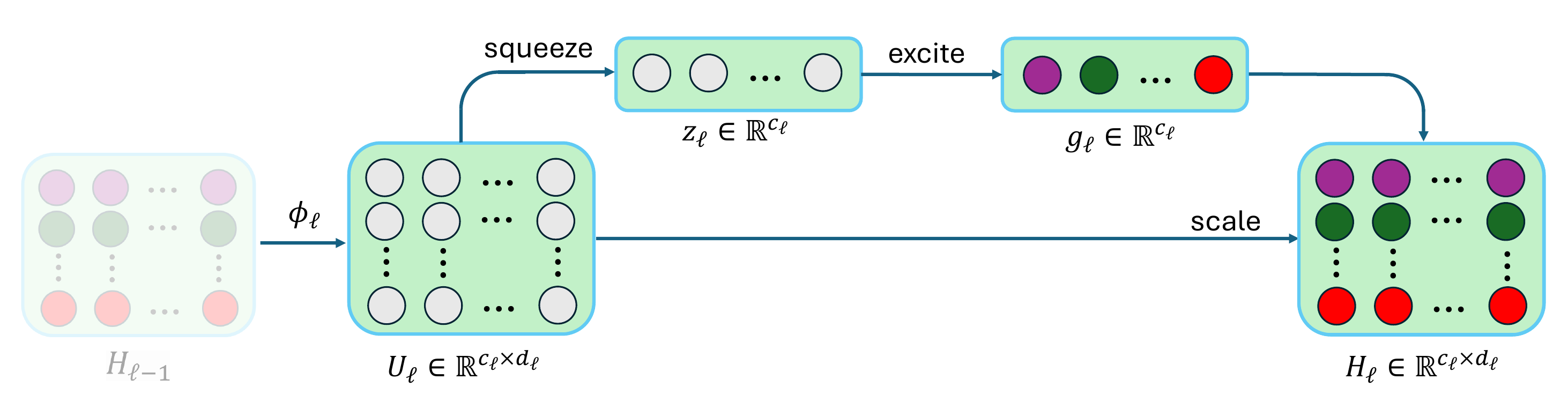}
    \caption{Channel-wise gating mechanism. Given an intermediate feature map $U_\ell$, global average pooling (squeeze) produces a channel descriptor $z_\ell$, which is transformed through bottleneck layers (excitation) to produce gate values $g_\ell \in (0,1)^{c_\ell}$. The gates then scale each channel independently, preserving feature directions while modulating magnitudes.}
    \label{fig:gate}
\end{figure*}

\paragraph{Channel-Wise Gates}  At the output of each intermediate layer in the backbone, we introduce lightweight gating modules that adaptively reweight channel activations. The gating mechanism, illustrated in Figure \ref{fig:gate}, is inspired by Squeeze-and-Excitation Networks (SENets)\cite{hu2018senet} but repurposed for continual adaptation. For a feature map $U_\ell \in \mathbb{R}^{c_\ell \times d_\ell}$ at an intermediate layer, the gate starts by computing a global descriptor by aggregating temporal information.\looseness=-1

$$
z_\ell = \dfrac{1}{d_\ell}\sum_{j=1}^{d_\ell}U_{\ell, :, j}
$$
where $z_\ell \in \mathbb{R}^{c_\ell}$ denotes the channel-wise mean activation aggregated over all temporal positions. In the original SENet formulation, this operation corresponds to the squeeze stage. 

The resulting descriptor $z_\ell$ is then passed through an excitation mapping parameterized by learnable weights. In our implementation, the excitation function is implemented using two fully connected layers that transform the compressed representation $z_\ell$ into channel-wise gating coefficients $g_\ell$ via a sigmoid nonlinearity, given by,
$$
    g_\ell = \sigma(\mathbf{W}_{2,\ell}  \cdot \text{ReLU}(\mathbf{W}_{1,\ell} \cdot z_{\ell}))
$$
where $\mathbf{W}_{1,\ell} \in \mathbb{R}^{c_{\ell}/r \times c_\ell}$ and $\mathbf{W}_{2,\ell}\in \mathbb{R}^{c_{\ell} \times c_{\ell}/r}$ are learnable bottleneck layers with a reduction ratio $r$, and $\sigma$ is the sigmoid activation that constrains the gating vector to $g_\ell \in (0,1)^{c_\ell}$.

The final step of the gating module applies the learned channel-wise coefficients to the intermediate feature representation. Specifically, the gating vector $g_\ell$ is used to reweight the channels of $U_\ell$ through a diagonal scaling operation,

$$
    H_\ell = \text{D}(g_\ell) \cdot U_{\ell}, \label{eq:gate_apply}
$$
where  
\[
\text{D}(g_\ell) = \begin{bmatrix}
g_{\ell, 1} & 0 & \cdots & 0 \\
0 & g_{\ell, 2} & \cdots & 0 \\
\vdots & \vdots & \ddots & \vdots \\
0 & 0 & \cdots & g_{\ell, c_\ell}
\end{bmatrix}
\]

This diagonal operation restricts adaptation to channel-wise \emph{magnitude scaling} while preserving the \emph{directional structure} of the pretrained features. This bounded form of modulation limits representational drift and enables stable continual learning.\looseness=-1

\paragraph{Shared Classifier} Finally, we employ a single-layer linear classifier shared across all tasks and continuously updated as new subjects are introduced. First, a temporal pooling operator is applied to the final gated feature map $H_L$ to obtain a fixed-dimensional representation,\looseness=-1

$$h = \dfrac{1}{d_L}\sum_{j=1}^{d_L} H_{L,:,j}$$

followed by a linear operation,

$$\hat{y} = \mathbf{W}_{\text{cl}} h$$

where $\mathbf{W}_{\text{cl}} \in \mathbb{R}^{|\mathcal{Y}| \times c_L}$, and is trained continually as new tasks are encountered. This design forces the model to learn a common decision boundary in the gated feature space rather than task-specific output mappings.\looseness=-1

Algorithm~\ref{alg:training} summarizes the training procedure. Here, only the gate parameters $\mathbf{W}_{1,\ell}, \mathbf{W}_{2, \ell}$ and the classifier $\mathbf{W}_{\text{cl}}$ are updated for each task. The backbone blocks remain frozen throughout. This dramatically reduces the number of trainable parameters per task (typically less than 2\% of total model parameters) and enforces stability by preventing changes to the shared feature representation. Furthermore, the algorithm does not store data from previous tasks and does not require explicit task boundary signals at test time. The model must infer subject identity implicitly from input statistics via the gating mechanism.\looseness=-1

\begin{algorithm}[h]
\caption{Continual Training with Interleaved Channel-Wise Gates}
\label{alg:training}
\begin{algorithmic}[1]
\REQUIRE Frozen pretrained backbone blocks $\{\phi_\ell\}_{\ell=1}^{L}$, gate modules with parameters $\{\mathbf{W}_{1,\ell}\, \mathbf{W}_{2,\ell}\}_{\ell=1}^{L}$, classifier with parameter $\mathbf{W}_{\text{cl}}$
\REQUIRE data stream $(\mathbf{x}_t, y_t)$, epochs $E$, optimizer $\mathcal{O}$
\STATE Initialize all parameters  $\mathbf{\Theta} = \{\{\mathbf{W}_{1,\ell}\, \mathbf{W}_{2,\ell}\}_{\ell=1}^{L}, \mathbf{W}_{\text{cl}}\}$; freeze $\{\phi_\ell\}_{\ell=1}^{L}$
\FOR{$e=1,\dots,E$}
    \FOR{each minibatch $\mathcal{B}$ drawn from $(\mathbf{x}_t,y_t)$}
        \FOR{each $(\mathbf{x},y)\in\mathcal{B}$}
            \STATE $H_0 \gets \mathbf{x}$
            \FOR{$\ell=1,\dots,L$}
                \STATE $U_\ell \gets \phi_\ell(H_{\ell-1})$
                \STATE $z_\ell$ = Pool($U_\ell$)
                \STATE    $g_\ell \gets \sigma\!\left(\mathbf{W}_{2,\ell} \cdot \text{ReLU}(\mathbf{W}_{1,\ell} \cdot z_\ell)\right)
$
                \STATE $H_\ell \gets \text{D}(g_\ell)\cdot U_\ell$
            \ENDFOR
            \STATE $h \gets \text{Pool}(H_L)$
            \STATE $\hat{y} \gets \mathbf{W}_{\text{cl}}\cdot h$
            \STATE $\mathcal{L} \gets \text{CrossEntropy}(\hat{y}, y) + \lambda \|\mathbf{\Theta}\|_2^2$
        \ENDFOR
        \STATE Compute gradients $\nabla_{\mathbf{\Theta}}\mathcal{L}$
        \STATE Update model parameters $\mathbf{\Theta}\gets \mathcal{O}(\mathbf{\Theta}, \nabla_{\mathbf{\Theta}}\mathcal{L})$
        
    \ENDFOR
\ENDFOR
\end{algorithmic}
\end{algorithm}

%% file: Sections/04_theoretical_insights.tex
\newtheorem{theorem}{Theorem}
\newtheorem{corollary}{Corollary}
\newtheorem{proposition}[theorem]{Proposition}
\newtheorem{remark}{Remark}
\newtheorem{assumption}{Assumption}

\section{Stability–Plasticity Trade-offs in Gated Continual Learning}

The theoretical analysis clarifies why channel-wise gating yields improved stability in domain-incremental HAR. By restricting adaptation to bounded, diagonal modulation of frozen backbone features, updates induced by new subjects result in limited functional drift on previously seen data. This contrasts with full fine-tuning, classifier-only, or stacked-layer adaptations, where accommodating subject-specific shifts often requires larger or less structured parameter updates, increasing interference across tasks. In the following, we establish formal guarantees on stability, characterize the expressiveness of diagonal gating, and analyze why this mechanism achieves a favorable stability–plasticity trade-off.\looseness=-1

%==============================================================================
\subsection{Stability Guarantees}
%==============================================================================

We begin by analyzing how gated adaptation affects the internal representations learned for previously seen subjects. Because the backbone is frozen, all representational changes induced by learning a new task arise exclusively from updates to the channel-wise gates. This allows us to characterize forgetting in terms of how much the gated feature maps drift on inputs from earlier tasks.\looseness=-1

\begin{theorem}[Bounded Feature Drift under Diagonal Gating]
\label{thm:feature_drift}
Let $U(\mathbf{x}) \in \mathbb{R}^{C \times d}$ denote the ungated feature map produced by a frozen backbone for an input $\mathbf{x}$. Let $g(\mathbf{x}), g'(\mathbf{x}) \in (0,1)^C$ be the channel-wise gate vectors before and after learning a new task, and define the corresponding gated representations,\looseness=-1
\[
H(\mathbf{x}) = D(g(\mathbf{x}))\,U(\mathbf{x}), 
\qquad
H'(\mathbf{x}) = D(g'(\mathbf{x}))\,U(\mathbf{x}),
\]
where $D(\cdot)$ denotes a diagonal matrix. Then the feature drift satisfies,\looseness=-1
\[
\|H'(\mathbf{x}) - H(\mathbf{x})\|_F
\;\le\;
\delta(\mathbf{x})\,\|U(\mathbf{x})\|_F,
\]
where $\delta(\mathbf{x}) = \|g'(\mathbf{x}) - g(\mathbf{x})\|_\infty < 1$.
\end{theorem}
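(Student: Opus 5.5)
The plan is to exploit the fact that the frozen backbone makes $U(\mathbf{x})$ identical before and after learning the new task. The drift is then a single diagonal operator acting on a fixed matrix:
\[
H'(\mathbf{x}) - H(\mathbf{x}) = \bigl(D(g'(\mathbf{x})) - D(g(\mathbf{x}))\bigr)U(\mathbf{x}) = D\bigl(\Delta g(\mathbf{x})\bigr)\,U(\mathbf{x}),
\]
where $\Delta g(\mathbf{x}) = g'(\mathbf{x}) - g(\mathbf{x})$. The first step records this factorization, which uses only the linearity of $D(\cdot)$ in its argument.

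The second step is a row-wise computation of the Frobenius norm. Row $c$ of $D(\Delta g)U$ is $\Delta g_c$ times row $c$ of $U$, written $U_{c,:}$. Hence
\[
\|D(\Delta g)U\|_F^2 = \sum_{c=1}^{C} (\Delta g_c)^2 \|U_{c,:}\|_2^2 \le \|\Delta g\|_\infty^2 \sum_{c=1}^C \|U_{c,:}\|_2^2 = \delta(\mathbf{x})^2\|U(\mathbf{x})\|_F^2 .
\]
Taking square roots gives the stated bound. Equivalently, one can invoke $\|AB\|_F \le \|A\|_2\|B\|_F$ together with the fact that the spectral norm of a diagonal matrix is its largest absolute entry. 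I would present the explicit row-wise sum, since it makes the channel-wise interpretation transparent.

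The third step is the claim $\delta(\mathbf{x})<1$, which follows from the range of the sigmoid. Each coordinate satisfies $g_c, g'_c \in (0,1)$, so $|g'_c - g_c| < 1$. There are finitely many channels $C$, so the maximum of these differences is attained and is also strictly less than $1$.

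None of these steps is a real obstacle; the only point that needs care is to state explicitly that $U(\mathbf{x})$ is unchanged across tasks. This holds because both the backbone $\phi_\ell$ and the input are fixed. In the interleaved multi-layer setting of Algorithm~\ref{alg:training}, however, deeper ungated maps $U_\ell$ depend on upstream gates. I would therefore remark that the theorem is stated for a single gated layer, namely the final one, or equivalently with $U$ taken as the input to the gate being analyzed. A multi-layer version would require composing these bounds with Lipschitz constants of the frozen blocks, which is outside the scope of this statement.
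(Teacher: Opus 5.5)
Your proof is correct and follows the paper's argument: the same factorization $H'-H = D(g'-g)\,U$, then the bound $\|D(g'-g)\,U\|_F \le \|D(g'-g)\|_2\,\|U\|_F = \delta(\mathbf{x})\,\|U\|_F$ (your row-wise sum is just an explicit proof of this step), and the sigmoid range to get $\delta(\mathbf{x})<1$. Your caveat about $U(\mathbf{x})$ is also well taken: the paper justifies its invariance only by the backbone being frozen, but in the interleaved architecture this holds strictly only when no trainable gate sits upstream of the gate being analyzed.
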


\begin{proof}
Because the backbone parameters are frozen, the ungated feature map $U(\mathbf{x})$ remains unchanged before and after learning a new task. The difference between the gated representations can therefore be written as,\looseness=-1
\begin{align*}
H'(\mathbf{x}) - H(\mathbf{x})
&= \big(D(g'(\mathbf{x})) - D(g(\mathbf{x}))\big)\,U(\mathbf{x}) \\
&= D\big(g'(\mathbf{x}) - g(\mathbf{x})\big)\,U(\mathbf{x}). 
\end{align*}
Applying submultiplicativity of matrix norms and using the fact that the spectral norm of a diagonal matrix equals the infinity norm of its diagonal entries, we obtain,\looseness=-1
\begin{align*}
\|H'(\mathbf{x}) - H(\mathbf{x})\|_F
&\le \|D(g'(\mathbf{x}) - g(\mathbf{x}))\|_2\,\|U(\mathbf{x})\|_F \\
&= \|g'(\mathbf{x}) - g(\mathbf{x})\|_\infty\,\|U(\mathbf{x})\|_F.
\end{align*}
Finally, because each gate component is produced by a sigmoid nonlinearity, we have $g_i(\mathbf{x}) \in (0,1)$ for all channels $i$, implying $\delta(\mathbf{x}) < 1$.\looseness=-1
\end{proof}

\begin{remark}[Contrast with Unconstrained Adaptation]
\label{rem:unconstrained}
If the backbone were trainable, the feature drift would instead be bounded by $\|U'(\mathbf{x}) - U(\mathbf{x})\|_F$, which can be arbitrarily large since $U'$ depends on all backbone parameters. By freezing the backbone and restricting updates to gates, we transform an unbounded representation shift into a multiplicatively bounded one. This is the fundamental mechanism underlying the stability of our approach.\looseness=-1
\end{remark}

Having established that gated adaptation induces a bounded change in the intermediate feature maps, we now examine how this drift propagates through temporal pooling and the shared linear classifier. Ultimately, forgetting occurs when the logits associated with an input from a previously seen subject shift enough to alter the predicted label.\looseness=-1

\begin{theorem}[Bounded Logit Drift]
\label{thm:logit_drift}
Let $U(\mathbf{x}) \in \mathbb{R}^{C \times d}$ denote the ungated backbone feature map of temporal length $d$. Let
\[
H(\mathbf{x}) = D(g(\mathbf{x}))\,U(\mathbf{x}), 
\qquad
H'(\mathbf{x}) = D(g'(\mathbf{x}))\,U(\mathbf{x}),
\]
be the gated representations before and after learning a new task. Define the pooled vectors
\[
h(\mathbf{x}) = \frac{1}{d}\, H(\mathbf{x})\mathbf{1}, 
\qquad
h'(\mathbf{x}) = \frac{1}{d}\, H'(\mathbf{x})\mathbf{1},
\]
and let the classifier change from $W$ to $W'$, with predictions
\[
\hat{y}(\mathbf{x}) = W h(\mathbf{x}), 
\qquad
\hat{y}'(\mathbf{x}) = W' h'(\mathbf{x}).
\]
Then the logit drift decomposes as
\begin{align}
\|\hat{y}'(\mathbf{x}) - \hat{y}(\mathbf{x})\|_2
&\le 
\overbrace{\|W'\|_2 \cdot \frac{\delta(\mathbf{x})}{\sqrt{d}}\,\|U(\mathbf{x})\|_F}^{\text{gate-induced drift}}
\nonumber\\& +
\underbrace{\|W' - W\|_2\,\|h(\mathbf{x})\|_2}_{\text{classifier-induced drift}},
\label{eq:logit_decomposition}
\end{align}
where $\delta(\mathbf{x}) = \|g'(\mathbf{x}) - g(\mathbf{x})\|_\infty < 1$.
\end{theorem}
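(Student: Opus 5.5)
The plan is an add-and-subtract decomposition followed by a chain of norm inequalities, reusing Theorem~\ref{thm:feature_drift} for the feature-level term. First insert the intermediate quantity $W' h(\mathbf{x})$ and write
\[
\hat{y}'(\mathbf{x}) - \hat{y}(\mathbf{x}) = W'\big(h'(\mathbf{x}) - h(\mathbf{x})\big) + (W' - W)\,h(\mathbf{x}).
\]
The triangle inequality, together with $\|Av\|_2 \le \|A\|_2\|v\|_2$ for each term, gives
\[
\|\hat{y}' - \hat{y}\|_2 \le \|W'\|_2\,\|h' - h\|_2 + \|W'-W\|_2\,\|h\|_2.
\]
The second summand is already the classifier-induced term in \eqref{eq:logit_decomposition}. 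What remains is to bound the pooled drift $\|h'(\mathbf{x}) - h(\mathbf{x})\|_2$.

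For the pooled drift, use linearity of pooling to write
\[
h' - h = \tfrac{1}{d}\,(H' - H)\,\mathbf{1}, \qquad \mathbf{1}\in\mathbb{R}^d .
\]
Then
\[
\|h'-h\|_2 \le \tfrac{1}{d}\,\|H'-H\|_2\,\|\mathbf{1}\|_2 \le \tfrac{1}{d}\,\|H'-H\|_F\,\sqrt{d} = \tfrac{1}{\sqrt{d}}\,\|H'-H\|_F ,
\]
using that the spectral norm is dominated by the Frobenius norm. The same estimate follows row-wise from Cauchy--Schwarz: each row mean $\tfrac1d\sum_j a_j$ satisfies $|\tfrac1d\sum_j a_j|\le \tfrac{1}{\sqrt d}\,(\sum_j a_j^2)^{1/2}$, and summing the squares over channels gives the Frobenius norm. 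Invoking Theorem~\ref{thm:feature_drift}, we have $\|H'-H\|_F \le \delta(\mathbf{x})\,\|U(\mathbf{x})\|_F$. Substituting yields the gate-induced term $\|W'\|_2\,\tfrac{\delta(\mathbf{x})}{\sqrt d}\,\|U(\mathbf{x})\|_F$.

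The claim $\delta(\mathbf{x})<1$ is inherited directly from Theorem~\ref{thm:feature_drift}, since both gate vectors lie in $(0,1)^C$. No step is a genuine obstacle; the only point requiring care is getting the $1/\sqrt{d}$ factor exactly rather than a cruder $1$. The $1/\sqrt d$ arises precisely from $\|\mathbf{1}\|_2=\sqrt d$ combined with the $1/d$ normalization, so I would write that step out explicitly.

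The choice of intermediate term also matters. Pairing $W'$ with the feature drift, rather than $W$, is what produces $\|W'\|_2$ in the first term and $\|h\|_2$ (not $\|h'\|_2$) in the second, matching the statement as written.
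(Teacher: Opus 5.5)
Your proposal is correct and follows the paper's proof essentially step for step. It uses the same add-and-subtract decomposition $W'(h'-h)+(W'-W)h$, the Cauchy--Schwarz bound $\|h'-h\|_2\le d^{-1/2}\|H'-H\|_F$ for average pooling, and an appeal to Theorem~\ref{thm:feature_drift} for the feature drift, with your explicit $\|\mathbf{1}\|_2=\sqrt{d}$ computation merely spelling out what the paper states in one line.
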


\begin{proof}
From Theorem~\ref{thm:feature_drift}, the gated feature drift satisfies
\begin{equation}
\|H'(\mathbf{x}) - H(\mathbf{x})\|_F \le \delta(\mathbf{x})\,\|U(\mathbf{x})\|_F. 
\label{eq:featdrift}
\end{equation}
Temporal average pooling is a linear operator. For $v = \frac{1}{d}M\mathbf{1}$ where $M \in \mathbb{R}^{C \times d}$, we have $\|v\|_2 \le \frac{1}{\sqrt{d}}\|M\|_F$ by Cauchy-Schwarz. Thus
\begin{equation}
\|h'(\mathbf{x}) - h(\mathbf{x})\|_2 \le \frac{1}{\sqrt{d}}\,\|H'(\mathbf{x}) - H(\mathbf{x})\|_F \le \frac{\delta(\mathbf{x})}{\sqrt{d}}\,\|U(\mathbf{x})\|_F.
\label{eq:h-drift}
\end{equation}
Decomposing the logit difference:
\begin{align*}
\hat{y}'(\mathbf{x}) - \hat{y}(\mathbf{x})
&= W' h'(\mathbf{x}) - W h(\mathbf{x}) \\
&= W' \big(h'(\mathbf{x}) - h(\mathbf{x})\big) + \big(W' - W\big)\,h(\mathbf{x}). 
\end{align*}
The result follows from the triangle inequality and submultiplicativity.
\end{proof}

The decomposition in~\eqref{eq:logit_decomposition} reveals that logit drift has two independent sources: gate changes and classifier changes. This separation is crucial. It shows that even when the classifier must adapt significantly for a new task, the gate-induced drift on old tasks remains controlled by $\delta(\mathbf{x})$.\looseness=-1

%------------------------------------------------------------------------------

Having bounded the logit drift, we now relate this perturbation to decision stability.

\begin{theorem}[Sufficient Condition for Prediction Stability]
\label{thm:stability}
Let $\hat{y}(\mathbf{x})$ and $\hat{y}'(\mathbf{x})$ denote the logits of an earlier-task sample $\mathbf{x}$ before and after learning a new task. Let the classification margin be
\[
m(\mathbf{x}) = \hat{y}_{y}(\mathbf{x}) - \max_{k\neq y} \hat{y}_{k}(\mathbf{x}),
\]
where $y$ is the true label. If
\[
\|\hat{y}'(\mathbf{x}) - \hat{y}(\mathbf{x})\|_\infty < \frac{m(\mathbf{x})}{2},
\]
then the predicted label is preserved: $\arg\max_k \hat{y}'_k(\mathbf{x}) = y$.
\end{theorem}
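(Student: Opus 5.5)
We argue directly from the definition of the margin, using only the $\ell_\infty$ perturbation bound; no earlier result is needed beyond the setup. One hypothesis is implicit and we make it explicit: the margin condition presupposes $m(\mathbf{x})>0$, so $\mathbf{x}$ is correctly classified before the update. If $m(\mathbf{x})\le 0$, the hypothesis $\|\hat{y}'(\mathbf{x})-\hat{y}(\mathbf{x})\|_\infty < m(\mathbf{x})/2$ cannot hold, because the left side is nonnegative. The statement is then vacuous, and we assume $m(\mathbf{x})>0$ from now on.

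Set $\epsilon = \|\hat{y}'(\mathbf{x})-\hat{y}(\mathbf{x})\|_\infty$. By definition of the $\ell_\infty$ norm, every coordinate satisfies $|\hat{y}'_k(\mathbf{x})-\hat{y}_k(\mathbf{x})|\le\epsilon$. This gives two one-sided bounds:
\[
\hat{y}'_y(\mathbf{x}) \ge \hat{y}_y(\mathbf{x})-\epsilon,
\qquad
\hat{y}'_k(\mathbf{x}) \le \hat{y}_k(\mathbf{x})+\epsilon \le \max_{j\neq y}\hat{y}_j(\mathbf{x})+\epsilon \quad \text{for every } k\neq y.
\]
Subtracting the second bound from the first yields, for every $k\neq y$,
\[
\hat{y}'_y(\mathbf{x})-\hat{y}'_k(\mathbf{x}) \ge m(\mathbf{x})-2\epsilon .
\]
The hypothesis $\epsilon<m(\mathbf{x})/2$ makes the right-hand side strictly positive. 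So $\hat{y}'_y(\mathbf{x})>\hat{y}'_k(\mathbf{x})$ for all $k\neq y$. Hence $y$ is the unique maximizer, and $\arg\max_k \hat{y}'_k(\mathbf{x})=y$ with no tie-breaking ambiguity.

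The argument is elementary, so there is no real technical obstacle. The only care points are the tacit assumption $m(\mathbf{x})>0$ handled above and the use of a strict inequality, which guarantees uniqueness of the argmax. If the result is to be chained with Theorem~\ref{thm:logit_drift}, one further step is needed: bounding the $\ell_\infty$ drift by the $\ell_2$ drift via $\|\cdot\|_\infty\le\|\cdot\|_2$. A sufficient condition for stability is then that the right-hand side of \eqref{eq:logit_decomposition} is less than $m(\mathbf{x})/2$. That extension would be stated as a corollary rather than folded into this proof.
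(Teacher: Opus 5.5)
Your proof is correct and follows essentially the same route as the paper: use the coordinatewise $\ell_\infty$ bound to lower the true-class logit and raise each competitor by at most the drift, then conclude from the margin. The one difference is that the paper compares $y$ only against the original runner-up $k^*$, leaving the extension to the other classes implicit, whereas you bound $\hat{y}'_k(\mathbf{x})$ for every $k\neq y$ via $\max_{j\neq y}\hat{y}_j(\mathbf{x})$, which makes that step explicit.
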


\begin{proof}
Let $k^* = \arg\max_{k \neq y} \hat{y}_k(\mathbf{x})$ be the runner-up class. By the margin definition and the perturbation bound:
\begin{align*}
\hat{y}'_y(\mathbf{x}) &\ge \hat{y}_y(\mathbf{x}) - \|\hat{y}' - \hat{y}\|_\infty > \hat{y}_y(\mathbf{x}) - \frac{m(\mathbf{x})}{2}, \\
\hat{y}'_{k^*}(\mathbf{x}) &\le \hat{y}_{k^*}(\mathbf{x}) + \|\hat{y}' - \hat{y}\|_\infty < \hat{y}_{k^*}(\mathbf{x}) + \frac{m(\mathbf{x})}{2}.
\end{align*}
Since $\hat{y}_y(\mathbf{x}) - \hat{y}_{k^*}(\mathbf{x}) = m(\mathbf{x})$, we have $\hat{y}'_y(\mathbf{x}) > \hat{y}'_{k^*}(\mathbf{x})$, preserving the prediction.
\end{proof}

Combining Theorems~\ref{thm:logit_drift} and~\ref{thm:stability}, we obtain an explicit sufficient condition for zero forgetting:

\begin{corollary}[Margin-Based Forgetting Guarantee]
\label{cor:margin}
For a sample $\mathbf{x}$ from a previous task with margin $m(\mathbf{x}) > 0$, the prediction is preserved if
\[
\|W'\|_2 \cdot \frac{\delta(\mathbf{x})}{\sqrt{d}}\,\|U(\mathbf{x})\|_F + \|W' - W\|_2\,\|h(\mathbf{x})\|_2 < \frac{m(\mathbf{x})}{2}.
\]
\end{corollary}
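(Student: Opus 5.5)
The corollary follows by chaining Theorem~\ref{thm:logit_drift} into Theorem~\ref{thm:stability}; the only extra ingredient is a comparison between the $\ell_2$ and $\ell_\infty$ norms of the logit drift.

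\emph{Step 1.} Fix a sample $\mathbf{x}$ from a previous task with true label $y$ and margin $m(\mathbf{x})>0$. We implicitly assume that, before the update, the prediction on $\mathbf{x}$ was $y$, which is exactly what $m(\mathbf{x})>0$ encodes.

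\emph{Step 2.} Apply Theorem~\ref{thm:logit_drift}. The left-hand side of the hypothesized inequality is precisely the right-hand side of~\eqref{eq:logit_decomposition}, so it upper-bounds $\|\hat{y}'(\mathbf{x})-\hat{y}(\mathbf{x})\|_2$. Combined with the hypothesis, this gives
\[
\|\hat{y}'(\mathbf{x})-\hat{y}(\mathbf{x})\|_2 < \frac{m(\mathbf{x})}{2}.
\]

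\emph{Step 3.} Pass from the $\ell_2$ norm to the $\ell_\infty$ norm. For any vector $v$, each coordinate satisfies $|v_k|\le\|v\|_2$, so $\|v\|_\infty\le\|v\|_2$. Hence
\[
\|\hat{y}'(\mathbf{x})-\hat{y}(\mathbf{x})\|_\infty < \frac{m(\mathbf{x})}{2}.
\]

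\emph{Step 4.} This is exactly the hypothesis of Theorem~\ref{thm:stability}. Invoking that theorem gives $\arg\max_k \hat{y}'_k(\mathbf{x}) = y$, so the prediction on $\mathbf{x}$ is preserved.

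There is no genuine obstacle. The one point to state explicitly is the direction of the norm comparison in Step~3: it runs the right way, since an $\ell_2$ bound implies the $\ell_\infty$ bound with no dimension factor. Everything else is direct substitution of earlier results.
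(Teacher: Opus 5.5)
Your proof is correct and follows the paper's route: the paper obtains the corollary simply by combining Theorem~\ref{thm:logit_drift} with Theorem~\ref{thm:stability}. Your Step~3 spells out the comparison $\|v\|_\infty \le \|v\|_2$, which the paper leaves implicit when it feeds the $\ell_2$ logit-drift bound into the $\ell_\infty$ stability condition.
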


This condition reveals the interplay between representation quality and forgetting: samples with larger margins (more confident predictions) and smaller feature norms are more robust to task interference.\looseness=-1

%==============================================================================

%==============================================================================
\subsection{Expressiveness of Diagonal Gating}
%==============================================================================

While Theorems~\ref{thm:feature_drift}--\ref{thm:stability} establish the boundary for decision stability, they raise a fundamental question of plasticity: is the constrained space of diagonal gating expressive enough to achieve high accuracy on new subjects?\looseness=-1

In domain-incremental HAR, subject variability typically manifests through several mechanisms that align well with channel-wise modulation:\looseness=-1

\begin{itemize}
    \item \textbf{Biomechanical differences}: Variations in limb length, body mass, and movement patterns affect the amplitude of accelerometer and gyroscope signals in a sensor-specific manner.\looseness=-1
    
    \item \textbf{Sensor placement variability}: Even with standardized protocols, slight differences in sensor attachment result in consistent per-channel scaling and offset effects.\looseness=-1
    
    \item \textbf{Device heterogeneity}: Different sensor models or calibration states introduce multiplicative gains that are channel-specific.\looseness=-1
\end{itemize}

These observations motivate the following structural assumption:\looseness=-1

\begin{assumption}[Channel-wise Domain Shift]
\label{ass:diagonal}
For input $\mathbf{x}$ from subject $t$, the backbone features satisfy
\[
U(\mathbf{x}) \approx D(s_t)\,\bar{U}(\mathbf{x}) + \varepsilon_t(\mathbf{x}),
\]
where $\bar{U}(\mathbf{x})$ represents a canonical (subject-agnostic) feature map, $s_t \in \mathbb{R}^C_{>0}$ captures subject-specific channel scaling, and $\varepsilon_t(\mathbf{x})$ is a residual with $\|\varepsilon_t(\mathbf{x})\|_F \ll \|U_t(\mathbf{x})\|_F$.
\end{assumption}

\begin{figure}[t]
    \centering
    \includegraphics[width=\columnwidth]{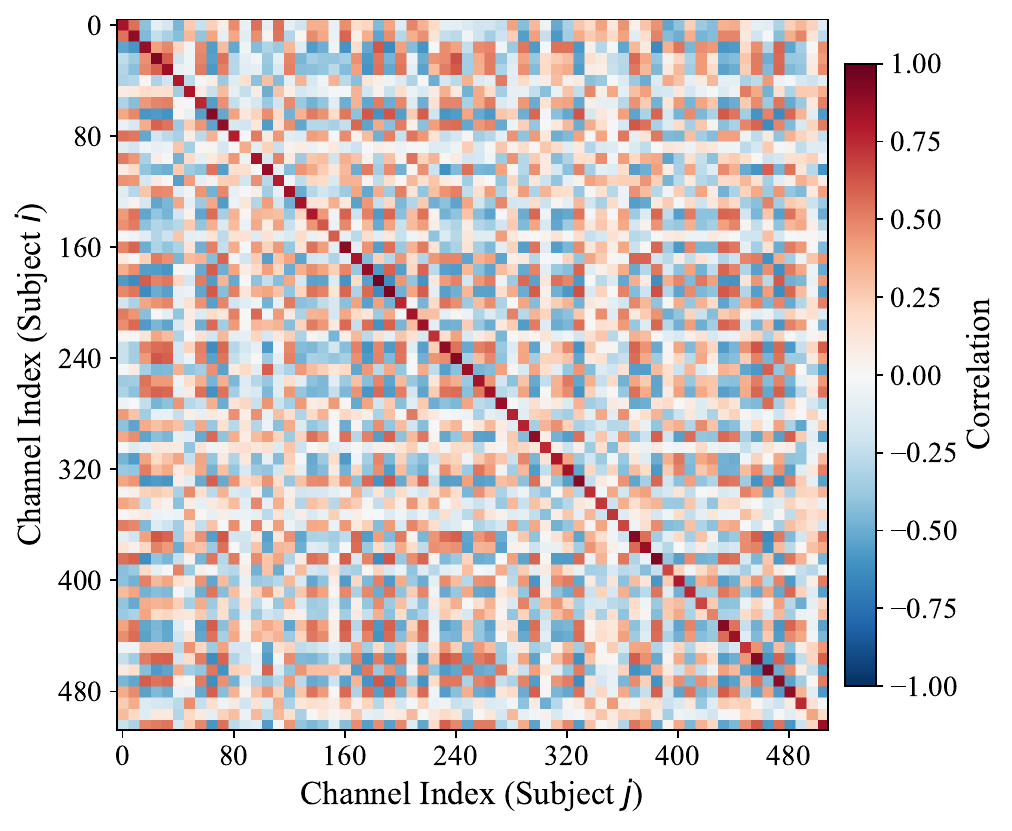}
    \caption{Cross-subject feature channel correlation matrix averaged over all subject pairs in PAMAP2. Entry $(c, c')$ shows the Pearson correlation between channel $c$ of subject $i$ and channel $c'$ of subject $j$, computed using activity-class centroids as paired observations. The strong diagonal (mean $\rho = 0.78$) indicates that channels preserve their identity across subjects, while the weak off-diagonal (mean $\rho \approx 0$) suggests minimal cross-channel mixing. This supports Assumption~\ref{ass:diagonal}: cross-subject variation is predominantly channel-wise.}\looseness=-1
    \label{fig:correlation_matrix}
\end{figure}

\begin{remark}[Empirical Support for Assumption~\ref{ass:diagonal}]
To validate this assumption, we analyzed the cross-subject correlation structure of backbone features on PAMAP2. For each subject, we computed the mean 512-dimensional feature vector (centroid) for each activity class using the frozen pretrained backbone. For each pair of subjects sharing $K$ common activities, we then computed a $C \times C$ correlation matrix where entry $(c, c')$ is the Pearson correlation between channel $c$ of subject $i$ and channel $c'$ of subject $j$ across the $K$ paired centroids. Figure~\ref{fig:correlation_matrix} shows the average correlation matrix across all 28 subject pairs.\looseness=-1

The resulting matrix exhibits a pronounced diagonal structure: the mean diagonal correlation is 0.78, indicating that the same channel responds similarly across subjects when performing the same activity. In contrast, the mean off-diagonal correlation is approximately zero, suggesting that different channels do not systematically covary across subjects. This pattern confirms that cross-subject variability in the learned feature space is predominantly channel-wise rather than involving complex cross-channel interactions.\looseness=-1
\end{remark}

Under this assumption, diagonal gating is provably sufficient for adaptation:\looseness=-1

\begin{theorem}[Expressiveness of Diagonal Gating]
\label{thm:expressiveness}
Under Assumption~\ref{ass:diagonal}, for each subject $t$ there exists a gate vector $g_t \in (0,1)^C$ and a global scalar $\alpha_t > 0$ such that the gated representation,
\[
H_t(\mathbf{x}) = D(g_t)\,U(\mathbf{x})
\]
satisfies,
\[
\|H_t(\mathbf{x}) - \alpha_t^{-1} U_t(\mathbf{x})\|_F \le \alpha_t^{-1}\|\varepsilon_t(\mathbf{x})\|_F
\]
for all $\mathbf{x}$. When $\varepsilon_t = 0$, the match is exact up to global scaling.
\end{theorem}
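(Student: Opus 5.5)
The plan is to read the approximate equality in Assumption~\ref{ass:diagonal} as an exact decomposition $U(\mathbf{x}) = D(s_t)\,\bar U(\mathbf{x}) + \varepsilon_t(\mathbf{x})$, with all the approximation pushed into $\varepsilon_t$. I will also read the comparison target $\alpha_t^{-1}U_t(\mathbf{x})$ in the statement as the rescaled canonical map $\alpha_t^{-1}\bar U(\mathbf{x})$. This is the only reading under which the bound is non-vacuous, since ``$U_t$'' otherwise coincides with $U$. The construction then inverts the subject-specific scaling channel by channel. I will set
\[
g_{t,c} = \frac{1}{\alpha_t\, s_{t,c}}, \qquad c=1,\dots,C,
\]
and choose the global scalar $\alpha_t$ large enough that every entry lies in $(0,1)$. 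Since $s_t\in\mathbb{R}^C_{>0}$, any $\alpha_t > 1/\min_c s_{t,c}$ works. The point of $\alpha_t$ is exactly to absorb the fact that a sigmoid gate cannot amplify, so perfect inversion is only possible up to a global contraction.

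The main computation is short. Multiplying the decomposition by $D(g_t)$ and using that diagonal matrices commute and $D(g_t)D(s_t)=\alpha_t^{-1}I$ gives
\[
H_t(\mathbf{x}) - \alpha_t^{-1}\bar U(\mathbf{x}) = D(g_t)\,\varepsilon_t(\mathbf{x}).
\]
Exactly as in the proof of Theorem~\ref{thm:feature_drift}, I then bound the right-hand side by $\|D(g_t)\|_2\,\|\varepsilon_t(\mathbf{x})\|_F = \|g_t\|_\infty\,\|\varepsilon_t(\mathbf{x})\|_F$. When $\varepsilon_t=0$ the difference vanishes identically, which gives the exact-match clause.

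The step I expect to be the obstacle is obtaining the stated constant $\alpha_t^{-1}$ rather than $\|g_t\|_\infty = (\alpha_t\min_c s_{t,c})^{-1}$. These agree only when $\min_c s_{t,c}\ge 1$. I would handle this by exploiting the scale ambiguity in Assumption~\ref{ass:diagonal}: the pair $(s_t,\bar U)$ is defined only up to $s_t\mapsto \kappa s_t$, $\bar U\mapsto \kappa^{-1}\bar U$. We may therefore normalize so that $\min_c s_{t,c}=1$, absorbing the constant into the canonical map, or equivalently into $\alpha_t$. After this normalization, $\|g_t\|_\infty = \alpha_t^{-1}$, and any $\alpha_t>1$ keeps the gates in $(0,1)$ and yields the claimed inequality.

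Finally, I would remark that $g_t$ is constant in $\mathbf{x}$. The existence claim therefore holds uniformly for all $\mathbf{x}$ from subject $t$, and realizability by the SE excitation network is a separate approximation question not needed here.
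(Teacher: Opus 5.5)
Your argument is sound under your reading (modulo the normalization fix below), but it runs in the opposite direction from the paper's. Your claim that yours is the only non-trivial reading is also not right. The paper resolves the notational clash between Assumption~\ref{ass:diagonal} and the theorem the other way. It treats the theorem's $U(\mathbf{x})$ as the canonical map and $U_t(\mathbf{x}) = D(s_t)U(\mathbf{x}) + \varepsilon_t(\mathbf{x})$ as subject $t$'s features, takes $\alpha_t > \|s_t\|_\infty$ and $g_t = s_t/\alpha_t$, and obtains $H_t(\mathbf{x}) - \alpha_t^{-1}U_t(\mathbf{x}) = -\alpha_t^{-1}\varepsilon_t(\mathbf{x})$ identically. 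There the gate \emph{applies} the subject scaling. Because the residual is subtracted rather than passed through the gate, the constant $\alpha_t^{-1}$ comes out with equality and no normalization is needed. What it shows is that gating a canonical map can \emph{reproduce} a subject's features. Your gate $g_{t,c} = 1/(\alpha_t s_{t,c})$ instead acts on the actual backbone output and \emph{undoes} the subject scaling, sending each subject, up to the residual, to the canonical map times a positive scalar. This is closer to how the architecture feeds gated features to a shared classifier. The price is that the residual becomes $D(g_t)\varepsilon_t(\mathbf{x})$, which carries the extra factor $1/\min_c s_{t,c}$ you identified.

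Two details of your normalization step need repair. First, the ratio $\|g_t\|_\infty/\alpha_t^{-1} = 1/\min_c s_{t,c}$ is independent of $\alpha_t$, so this factor cannot be absorbed ``equivalently into $\alpha_t$''. Second, setting $\min_c s_{t,c} = 1$ separately for each $t$ rescales $\bar U$ by a subject-dependent factor, which contradicts its subject-agnostic role. Normalize once instead: replace $(s_t,\bar U)$ by $(\kappa s_t, \kappa^{-1}\bar U)$ with $\kappa = 1/\min_{t,c} s_{t,c}$, which is well defined because there are finitely many subjects. This leaves every $\varepsilon_t$ unchanged and gives $\min_c s_{t,c} \ge 1$ for all $t$. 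Then any $\alpha_t > 1$ keeps $g_t \in (0,1)^C$ with $\|g_t\|_\infty \le \alpha_t^{-1}$, which yields the stated inequality.
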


\begin{proof}
Choose $\alpha_t > \|s_t\|_\infty$ (e.g., $\alpha_t = (1+\epsilon)\|s_t\|_\infty$ for small $\epsilon > 0$) and define $g_t = s_t / \alpha_t$. Then $g_t \in (0,1)^C$ and
\[
D(g_t)\,U(\mathbf{x}) = \frac{1}{\alpha_t}D(s_t)\,U(\mathbf{x}).
\]
In the exact case ($\varepsilon_t = 0$), we have $U_t(\mathbf{x}) = D(s_t)U(\mathbf{x})$, so $H_t(\mathbf{x}) = \alpha_t^{-1}U_t(\mathbf{x})$ exactly.

For the approximate case:
\begin{align*}
H_t(\mathbf{x}) - \alpha_t^{-1}U_t(\mathbf{x}) 
&= \frac{1}{\alpha_t}D(s_t)U(\mathbf{x}) \\
&- \frac{1}{\alpha_t}\big(D(s_t)U(\mathbf{x}) + \varepsilon_t(\mathbf{x})\big) \\
&= -\frac{1}{\alpha_t}\varepsilon_t(\mathbf{x}).
\end{align*}
Taking norms yields the stated bound.
\end{proof}

\begin{remark}[Invariance to Global Scaling]
The factor $\alpha_t^{-1}$ is absorbed by the subsequent linear classifier without affecting decision boundaries. Specifically, if $W$ achieves correct classification on $U_t(\mathbf{x})$, then $\alpha_t W$ achieves the same on $H_t(\mathbf{x}) = \alpha_t^{-1}U_t(\mathbf{x})$. This degree of freedom allows diagonal gating to handle arbitrary positive scaling differences across subjects.\looseness=-1
\end{remark}

%------------------------------------------------------------------------------

In summary, this section examined why channel-wise gating provides a stable and effective mechanism for continual adaptation in domain-incremental HAR. By freezing the backbone and restricting updates to diagonal, bounded scaling of intermediate features, the model limits representational drift and protects knowledge from earlier subjects. At the same time, these gates are expressive enough to account for the dominant forms of cross-subject variability, enabling meaningful adaptation without introducing the destructive interference commonly observed with more flexible adapter layers. This structure naturally supports a favorable stability–plasticity trade-off, which aligns with the empirical performance observed in our experiments.\looseness=-1

A key limitation of this analysis is its reliance on domain shifts that are approximately channel-wise, an assumption that holds well in HAR but may not generalize to domains with richer cross-channel interactions. Additionally, the stability guarantees depend on non-trivial prediction margins, which can vary across activities and subjects. While the gating mechanism performs well under typical HAR conditions, more expressive forms of adaptation may be needed when feature shifts deviate significantly from this structure.\looseness=-1

%% file: Sections/05_experimntal_setup.tex
\section{Experimental Setup}

In this section, we describe the HAR benchmark datasets we used, followed by the implementation details and system configuration. We will also define the evaluation metrics used to assess our experiments, details of which will be elaborated in the next section.\looseness=-1

\subsection{Benchmarks}
\subsubsection{PAMAP2 Physical Activity Monitoring~{\normalfont\cite{pamap2_physical_activity_monitoring_231}}}
The PAMAP2 dataset contains inertial measurements from 8 subjects performing 12 predefined physical activities. Although the original release includes 6 additional optional activities, these were excluded from our evaluation due to inconsistent coverage across participants. The dataset provides accelerometer, gyroscope, and magnetometer signals, totaling 9 synchronized sensor channels sampled at 100 Hz.\looseness=-1

\subsubsection{Daily and Sports Activities~{\normalfont\cite{daily_and_sports_activities_256}}}
The Daily and Sports Activities (DSA) dataset comprises recordings from 8 subjects executing 18 activities of daily living and sport-related motions. The dataset includes accelerometer, gyroscope, and magnetometer signals collected across 15 channels, sampled at 25 Hz.\looseness=-1

\subsubsection{UCI Human Activity Recognition Using Smartphones~{\normalfont\cite{human_activity_recognition_using_smartphones_240}}}
The UCI HAR dataset consists of motion sensor recordings from 30 participants performing 6 common activities. It provides accelerometer and gyroscope measurements across 9 sensor channels, sampled at 50 Hz. The dataset includes both raw signals and preprocessed segments commonly used in human activity recognition research.\looseness=-1

\subsubsection{RealWorld HAR~{\normalfont\cite{sztyler2016realworld}}}
The RealWorld dataset contains recordings from 15 subjects performing 8 activities. The subjects span a broad demographic range in age, height, and weight, and the data were collected in everyday, in-the-wild settings rather than a fixed laboratory protocol. Each subject wore sensors at seven body locations simultaneously, sampled at 50 Hz.\looseness=-1

Relative to the other three benchmarks, which are collected under more controlled conditions, RealWorld is recorded during free, unscripted execution of each activity and exhibits greater variability in subject demographics and in how sensors are worn and oriented across the body. This makes it a useful complement to the laboratory-style datasets, providing a less controlled setting in which the marginal distribution of the sensor signals departs more substantially across subjects.\looseness=-1

\subsection{Implementation Details}
\subsubsection{Model Architecture}

Our CNN architecture consists of an initial 1×1 convolutional embedding layer that projects input channels to 256 dimensions, followed by four residual stacks with progressive channel expansion and temporal downsampling. Table~\ref{tab:architecture} details the layer configurations.\looseness=-1

\begin{table}[h]
\centering
\caption{CNN architecture with 4 residual stacks. Each stack contains multiple convolutional layers with batch normalization and ReLU activations.\looseness=-1}
\label{tab:architecture}
\small
\begin{tabular}{lccc}
\toprule
\textbf{Stack} & \textbf{Channels} & \textbf{Layers} & \textbf{Stride} \\
\midrule
Embedding & $C_0 \to 256$ & Conv1d (k=1) & 1 \\
Stack 1 & 256 & 4 conv layers (k=1,3,3,1) & 1 \\
Stack 2 & $256 \to 384$ & 4 conv layers (k=1,5,3,1) & 1 \\
Stack 3 & $384 \to 512$ & 4 conv layers (k=1,3,3,1) & 2 \\
Stack 4 & 512 & 4 conv layers (k=1,3,3,1) & 2 \\
\midrule
Pooling & 512 & AdaptiveAvgPool1d & -- \\
Classifier & $512 \to K$ & Linear & -- \\
\bottomrule
\end{tabular}
\end{table}

Each stack follows a bottleneck design: 1×1 compression, large-kernel feature extraction (3×3 or 5×5), 1×1 expansion, with residual connections around each stack and batch normalization after every convolutional layer. Global average pooling aggregates the final 512-channel feature map into a single vector, followed by a linear classifier. The total model contains approximately 7.4 million parameters.\looseness=-1

\subsubsection{Pre-training Procedure}

We pre-trained the CNN backbone on the WISDM dataset~\cite{wisdm}, a smartphone-based HAR dataset with 18 activity classes collected from 51 subjects. Raw tri-axial accelerometer data sampled at 20 Hz was segmented into fixed-length windows of 200 timesteps (10 seconds) with 50\% overlap (step size of 100 timesteps). Per-subject z-score normalization was applied independently to each axis to account for inter-subject variability in sensor calibration and placement.\looseness=-1

The model was trained for 300 epochs using the Adam optimizer with an initial learning rate of 0.001, cosine annealing schedule decaying to $10^{-6}$, mild weight decay of 0.0001, gradient clipping with max norm 1.0, and batch size 64. Cross-entropy loss was used, and the best model checkpoint based on validation accuracy was saved. Early stopping with patience of 20 epochs prevented overfitting.\looseness=-1

Once the pre-training is done, after each stack's residual block and batch normalization, we optionally insert a channel gate module. Each gate computes a channel-wise scaling vector $g \in (0,1)^C$ via global average pooling, a two-layer bottleneck MLP with reduction ratio $r=8$ (hidden dimension $\max(C/8, 16)$), ReLU activation, and sigmoid output. The gate is applied via element-wise multiplication. For a 512-channel stack, each gate contains $(512 \times 64) + (64 \times 512) = 65{,}536$ parameters. \looseness=-1

\subsubsection{Data Preprocessing} 

For each target dataset, raw sensor data was segmented using a sliding window with 50\% overlap. Window sizes matched the temporal resolution of each dataset: UCI-HAR used 128 timesteps (2.56s at 50 Hz), PAMAP2 used 200 timesteps (2s at 100 Hz), DSA used 125 timesteps (5s at 25 Hz), and RealWorld used 200 timesteps (4s at 50 Hz). Per-subject z-score normalization was applied independently to each channel before windowing. To match the pretrained model's expected input length (200 timesteps), windows were resized via linear interpolation.\looseness=-1

Each subject defines one task in the continual learning sequence. Within each subject's data, an 80/20 stratified train/test split by activity class prevents data leakage. Subjects are presented in random order, with 10 different random permutations tested per experiment to account for task-order variability.\looseness=-1

\subsubsection{Training Procedure}

For each subject (task), we train only the gate parameters (if present) and classifier while keeping the backbone frozen. Training proceeds for up to 100 epochs with early stopping. We use the Adam optimizer with learning rate 0.001, L2 regularization ($\lambda=0.0001$), batch size 64, and gradient clipping (max norm 1.0). After training on subject $t$, the model is evaluated on test sets from all subjects $1, \dots, t$ to measure forgetting. No data replay or task-specific regularization is used, and stability emerges purely from the frozen backbone and bounded gate updates.\looseness=-1

\subsection{Evaluation Metrics}
\subsubsection{Final Accuracy (FA)}
This metric evaluates the model’s overall performance across all subjects after sequential training has been completed. It directly reflects the model’s ability to retain what it has learned throughout the entire training process.\looseness=-1

\begin{equation}
\text{FA} = \frac{1}{T}\sum_{t=1}^T A_{t, T}
\end{equation}

where $A_{t, T}$ is the model’s test accuracy on subject $t$ after it has been trained on all $T$ subjects.\looseness=-1

\subsubsection{Forgetting Measure (FM)}
This metric quantifies the degree of forgetting that occurs over training. After completing training on all subjects, FM captures how much the model’s performance on earlier subjects has degraded relative to its best performance during training.\looseness=-1

\begin{equation}
\text{FM} = \dfrac{1}{T}\sum_{t=1}^T \left( \max_{t' \in \{1, \ldots, T\}} A_{t, t'} - A_{t, T} \right)
\end{equation}

where $A_{t, t'}$ denotes the test accuracy on subject $t$ after training up through subject $t'$.\looseness=-1

\subsubsection{Learning Accuracy (LA)}
This metric evaluates the model’s plasticity, measuring how effectively it acquires new information when each subject is introduced. LA reflects the model’s immediate performance on a newly encountered subject before any interference from subsequent tasks.\looseness=-1

\begin{equation}
\text{LA} = \frac{1}{T}\sum_{t=1}^{T} A_{t, t}
\end{equation}

where $A_{t,t}$ denotes the test accuracy on subject $t$ immediately after the model has been trained on that same subject.\looseness=-1

%% file: Sections/06_experimental_results.tex
\section{Experimental Results}

In this section, we present the different types of experiments we conducted and their results. Our experiments include comparing our method against the baseline algorithms, as well as ablation studies that quantify the contribution of each component of the proposed method.\looseness=-1

\subsection{Comparison with Replay-Free Baselines}

We first evaluate our approach against continual learning methods that, like ours, do not store past samples. This comparison isolates the effectiveness of different adaptation mechanisms under identical memory constraints. All baseline methods use a trainable backbone initialized from the same WISDM-pretrained weights, following their original formulations.\looseness=-1

\textbf{Regularization-based methods.} Elastic Weight Consolidation (EWC)~\cite{kirkpatrick2017ewc} penalizes changes to parameters deemed important for previous tasks, using the Fisher information matrix to estimate importance. Learning without Forgetting (LwF)~\cite{li2017lwf} preserves prior knowledge through knowledge distillation, using the previous model's outputs as soft targets during training on new tasks.\looseness=-1

\textbf{Architecture-based methods.} Hard Attention to Task (HAT)~\cite{serrà2018overcomingcatastrophicforgettinghard} learns binary attention masks over network units, with each task receiving a dedicated mask that protects important units from modification during subsequent learning.\looseness=-1

Table~\ref{tab:replay_free} summarizes the results. Our gating approach consistently achieves the highest final accuracy and lowest forgetting across all four benchmarks among replay-free methods. On PAMAP2, we outperform the next-best method (HAT) by 9.9 percentage points in final accuracy while reducing forgetting by 12.3 points. Notably, our approach also exhibits substantially lower variance across task orderings (std of 2.5\% vs 6--10\% for baselines), indicating more robust performance regardless of the order in which subjects are encountered. These results confirm that structured diagonal adaptation provides a more effective stability-plasticity tradeoff than either regularization-based constraints or binary masking.\looseness=-1

\input{Tables/2_replay_free_tbl}

\subsection{Comparison with Replay-Based Methods}

Replay-based methods represent a fundamentally different tradeoff: they achieve stronger performance by maintaining a memory buffer of past samples, incurring additional storage costs and, in the case of sensor data, potential privacy implications. We include this comparison to contextualize our results within the broader continual learning landscape.\looseness=-1

Table~\ref{tab:replay} presents results for Dark Experience Replay (DER)~\cite{buzzega2020darkexperiencegeneralcontinual} and its extension DER++, both using a buffer of 500 samples. As expected, replay-based methods achieve higher final accuracy than replay-free approaches, with DER++ reaching 90.1\% on PAMAP2 compared to 77.7\% for our method. However, this performance gap must be weighed against several practical considerations, which we now quantify directly rather than describe qualitatively.\looseness=-1

First, replay methods require training the full backbone, updating 100\% of model parameters per task compared to only 2.5\% for our approach. As reported in Table~\ref{tab:efficiency}, this difference is concrete: our method updates roughly 0.19M parameters per task versus 7.44M for full-backbone methods. Because no gradient flows through the frozen backbone, the measured per-step training cost falls to 0.60× that of full-backbone training and peak training memory drops by about a quarter (e.g., 561 MB vs. 764 MB on PAMAP2), since optimizer state and activation gradients are not maintained for the backbone. All methods share an identical inference forward pass (1.67 GFLOPs per window), so these savings are in training cost rather than inference latency.\looseness=-1

Second, storing 500 sensor samples requires persistent on-device memory that scales with buffer size. Concretely, each buffered window is the raw multi-sensor segment together with its label and stored logits; at a 500-sample buffer this amounts to 10.3 MB on PAMAP2, 17.2 MB on DSA, and 2.3 MB on UCI-HAR (Table~\ref{tab:efficiency}). For resource-constrained IoT devices such as smartwatches or fitness bands, this memory overhead may be prohibitive, particularly when multiple applications compete for limited storage.\looseness=-1

Third, and perhaps most critically for IoT deployments, retaining raw accelerometer and gyroscope data raises privacy concerns. Movement patterns can reveal sensitive information about health conditions, daily routines, and behavioral profiles~\cite{lane2010survey}. Because the buffer stores raw sensor windows rather than compressed embeddings, this storage cost is simultaneously a privacy exposure: every retained window is fully reconstructable motion data. For applications such as elderly care monitoring or remote patient tracking, such data retention may conflict with privacy regulations or user expectations. Our replay-free approach achieves strong performance without storing any historical data, making it suitable for privacy-sensitive edge deployments.\looseness=-1

Taken together, these measurements make explicit when the accuracy--retention trade-off favors each approach, and also delimit the deployment assumptions of our framework. Replay buys higher final accuracy (e.g., +12 points on PAMAP2) at the cost of persistently retained raw sensor data; it is the better choice where storage and raw-data retention are unconstrained and peak accuracy is paramount. Our replay-free method is preferable in the opposite regime (tight storage budgets, constrained training energy, or restrictions on raw-sensor retention from privacy regulation) which are precisely the defining conditions of on-device wearable deployment. We emphasize, however, that this trade-off is contingent on those conditions rather than universal: in centralized or privacy-permissive settings where a buffer is acceptable, replay (or the hybrid of Section~\ref{subsec:hybrid}) remains the stronger option. The framework also assumes that subject-induced shift is approximately channel-wise and that a sufficiently diverse pretrained backbone is available; where these assumptions weaken (for example under large cross-channel distribution shift, or when the source and target domains differ substantially) the advantage of frozen, diagonally gated adaptation may narrow, and more expressive adaptation could be required.\looseness=-1

We note that gating and replay address forgetting through complementary mechanisms: one architectural, one data-driven. In the following subsections, we will explore a hybrid approach that combines our gating mechanism with experience replay, demonstrating that these techniques can be effectively integrated when deployment constraints permit data retention.\looseness=-1

\input{Tables/3_replay_based_tbl}

% =====================================================================
% NEW SUBSECTION (R2.3 efficiency/privacy evidence + R2.7 on-device measurements)
% =====================================================================
\subsection{Efficiency, Storage, and Deployment Cost}
\label{subsec:efficiency}

To support the efficiency and privacy claims above with concrete numbers rather than qualitative argument, we measured the per-task cost of each method family on identical hardware. Trainable parameters, retained storage, and inference FLOPs are exact and hardware-independent; training time and peak memory are inherently device-specific, so we report training time as a ratio to full-backbone training measured on the same device, which transfers across platforms more reliably than absolute timings. Table~\ref{tab:efficiency} summarizes the results. Our gated method updates only about 2.5\% of parameters, retains no data, trains at roughly 0.60× the per-step cost of full-backbone methods, and reduces peak training memory by about a quarter, while sharing the identical inference forward pass of all methods. The retained-storage column also quantifies the privacy exposure of replay discussed above: between 2.3 and 17.2 MB of raw, fully reconstructable sensor windows per device at a 500-sample buffer, versus zero for our approach.\looseness=-1

\input{Tables/4_deployment_tbl}

\subsection{Effect of Backbone Freezing}

A central design choice in our approach is the use of a frozen pretrained backbone. To assess the impact of this decision, we compare models with trainable versus frozen backbones, both with and without gating mechanisms. Table~\ref{tab:freezing_gates} presents results across all four benchmarks.\looseness=-1

\paragraph{Freezing without gates.} Comparing \textit{Base} (trainable backbone, no gates) to \textit{Pretrained} (frozen backbone, no gates), we observe that freezing the backbone consistently reduces forgetting, demonstrating that a stable feature representation limits catastrophic interference. On PAMAP2, FM drops dramatically from 39.7\% to 17.5\%, a 56\% relative reduction. However, this stability comes at the cost of reduced learning accuracy: LA drops from 96.5\% to 93.9\%, as the frozen backbone cannot fully adjust to subject-specific variations. Despite this, the net effect on final accuracy is strongly positive: FA improves from 56.7\% to 76.5\% due to the substantial reduction in forgetting. We note that this favorable balance presumes the pretrained representation transfers reasonably to the target subjects; under a pronounced source--target mismatch the frozen backbone would forgo more plasticity than gating can recover, and partial unfreezing or a more expressive adapter could become preferable.\looseness=-1

\paragraph{Freezing with gates.} When gating mechanisms are introduced, the benefits compound. Comparing \textit{Base + Gates} (trainable backbone with gates) to \textit{Pretrained + Gates} (frozen backbone with gates), we find that freezing still reduces forgetting (FM drops from 31.3\% to 16.2\% on PAMAP2), and final accuracy improves substantially (FA increases from 64.9\% to 77.7\%). Gates provide the frozen backbone with targeted adaptation capacity, allowing it to match or exceed the plasticity of trainable backbones while maintaining the stability advantages of frozen representations.\looseness=-1

\subsection{Effect of Gating Mechanisms}

We next isolate the contribution of channel-wise gates by comparing models with and without gating, holding the backbone training strategy fixed. Results are shown in Table~\ref{tab:freezing_gates}.\looseness=-1

\paragraph{Gates on trainable backbones.} Comparing \textit{Base} (no gates) to \textit{Base + Gates}, we observe meaningful improvement from gating even when the backbone is trainable. On PAMAP2, FA improves from 56.7\% to 64.9\%, and FM decreases from 39.7\% to 31.3\%. This suggests that gates provide useful regularization even when the entire network can adapt, though the effect is more modest than with frozen backbones.\looseness=-1

\paragraph{Gates on frozen backbones.} The effect of gating becomes most pronounced when the backbone is frozen. Comparing \textit{Pretrained} (no gates) to \textit{Pretrained + Gates}, we see improvements in final accuracy on PAMAP2 (FA increases from 76.5\% to 77.7\%) while maintaining low forgetting. This demonstrates that gates provide the critical adaptation capacity needed when the backbone is frozen, enabling subject-specific feature modulation without disrupting learned representations.\looseness=-1

\paragraph{Gates reduce variance.} Beyond mean performance, we observe that gating reduces variability across task orderings, particularly on PAMAP2, where the standard deviation of FA decreases from 8.9\% (Base) to 2.5\% (Pretrained + Gates). This suggests that gates help stabilize learning across different subject sequences.\looseness=-1

\input{Tables/5_backbone_freezing_tbl}

\subsection{Gating vs. Stacked Trainable Layers}

A central theoretical claim of our work is that gated adaptation, which performs feature \emph{selection} through channel-wise magnitude scaling, offers superior stability compared to stacked trainable layers, which perform feature \emph{generation} through arbitrary linear transformations. To test this hypothesis, we compare our gating approach against architectures with varying numbers of fully connected layers inserted between the frozen backbone and classifier. Each stacked layer consists of a linear transformation (512 → 512), ReLU activation, and dropout (0.3).\looseness=-1

Table~\ref{tab:gates_vs_stacked} presents results across all four benchmarks. On PAMAP2, we observe a clear pattern as the number of stacked layers increases: learning accuracy (LA) improves monotonically from 93.9\% (no layer) to 95.3\% (3 layers), indicating that additional capacity helps the model fit each new subject. However, forgetting (FM) also increases substantially from 17.5\% to 25.3\%, demonstrating that this added capacity comes at a steep cost to stability. The net effect on final accuracy is negative: FA drops from 76.5\% to 70.1\% as forgetting outweighs the plasticity gains.\looseness=-1

In contrast, our gating approach achieves FM of only 16.2\% while maintaining competitive FA of 77.7\%, striking a balance between stability and plasticity. This pattern holds across datasets. On DSA, three stacked layers reach FM of 25.6\% compared to 19.6\% for gates, while achieving lower FA (73.0\% vs 78.7\%). On UCI-HAR, the gap is similarly pronounced.\looseness=-1

\paragraph{Capacity is not the answer.} An important observation from these results is that simply adding more trainable parameters does not reduce forgetting; in fact, it exacerbates it. The stacked 3-layer architecture contains significantly more parameters than our gating mechanism, yet performs substantially worse on forgetting. This confirms that the \emph{structure} of adaptation matters more than the sheer number of parameters, and that bounded, interpretable transformations (diagonal scaling) are preferable to unconstrained transformations (dense layers) in continual learning settings.\looseness=-1

\input{Tables/6_stacked_layer_tbl}

\input{Tables/7_KD_tbl}

\subsection{Effect of the Number of Gates}
\label{subsec:gate_sweep}

Our default inserts a channel gate after every backbone stack. To verify that the stability--plasticity benefit comes from the bounded, diagonal form of the gating rather than from a particular amount of gating capacity, we vary the number of gates inserted into the frozen backbone, adding them to the first $N$ stacks for $N \in \{0,1,2,3,4\}$. This mirrors the stacked-layer ablation of Table~\ref{tab:gates_vs_stacked}, but along the feature-\textit{selection} axis instead of feature-\textit{generation}; the $N{=}0$ and $N{=}4$ settings coincide with the \textit{Pretrained} and \textit{Pretrained + Gates (Ours)} configurations.

Table~\ref{tab:gate_sweep} reports the results across all four benchmarks. In contrast to stacking trainable layers, where adding capacity steadily increased forgetting and reduced final accuracy, varying the number of gates produces no such degradation: final accuracy stays within a narrow band across all gate counts, and the full configuration ($N{=}4$) attains the best or near-best final accuracy on every dataset while keeping forgetting low. Intermediate gate counts are competitive but do not consistently improve over the full configuration, indicating that the advantage of the mechanism is not sensitive to a finely tuned number of gates. These results support our main claim that bounded channel-wise selection, rather than added adaptation capacity per se, is what yields a favorable stability--plasticity trade-off.\looseness=-1

\input{Tables/8_gate_sweep_tbl}
\subsection{Improvement Strategies}

Having established that task-free gates with frozen backbones provide superior stability-plasticity tradeoffs, we now examine three extensions: (1) knowledge distillation to further reduce forgetting, (2) task-aware gates that leverage task identity at test time, and (3) combining our approach with experience replay.\looseness=-1

\subsubsection{Knowledge Distillation}

Knowledge distillation (KD)\cite{hinton2015distillingknowledgeneuralnetwork} is a widely used technique in continual learning that preserves the model's predictions on previous tasks by matching the output distributions of the updated model to those of the model before the update. We augment our gating approach with KD loss:\looseness=-1
\[
\mathcal{L}_{\text{total}} = \alpha \mathcal{L}_{\text{CE}}(y, \hat{y}) + (1-\alpha) \mathcal{L}_{\text{KD}}(\hat{y}, \hat{y}_{\text{old}}),
\]
where $\mathcal{L}_{\text{CE}}$ is cross-entropy on the current task, $\mathcal{L}_{\text{KD}}$ is the KL divergence between current and previous model outputs, and $\alpha=0.5$ balances the two objectives.\looseness=-1

Table~\ref{tab:extensions} shows that KD consistently reduces forgetting across all datasets. On PAMAP2, KD brings FM from 16.2\% down to 12.7\%, a 22\% relative reduction. However, this comes at the cost of reduced plasticity: LA drops from 93.8\% to 90.2\%, indicating that the model becomes more conservative in adapting to new subjects. The net effect on FA is minimal (77.7\% vs 77.5\%), suggesting that the forgetting reduction and plasticity loss roughly cancel out. While KD is effective at preserving stability, it may be too restrictive for scenarios where rapid adaptation to new subjects is critical.\looseness=-1

\subsubsection{Task-Aware Gates}

In our task-free setting, a single set of shared gates is trained across all subjects, and the model must infer subject identity implicitly from input statistics. An alternative design is to maintain separate gates for each subject and provide explicit task identity at test time. This \emph{task-aware} approach represents an oracle setting where perfect task segmentation is available.\looseness=-1

As expected, task-aware gates achieve lower forgetting than task-free gates. On PAMAP2, FM drops from 16.2\% (task-free) to 12.3\% (task-aware), and FA improves from 77.7\% to 78.9\%. The improvement is consistent but modest (2-4 percentage points), suggesting that task-free gates already capture much of the benefit of subject-specific adaptation without requiring explicit task boundaries. This is encouraging for real-world deployments, where task identity may not be available or may be ambiguous.\looseness=-1

Task-aware gates require storing one set of gate parameters per subject, increasing memory linearly with the number of tasks. For UCI-HAR with 30 subjects, this means 30× the gate parameters. In contrast, task-free gates require only a single set regardless of the number of subjects. For resource-constrained devices such as smartphones or wearables, task-free gates are thus the more practical choice.\looseness=-1

\subsubsection{Combining Gates with Experience Replay}
\label{subsec:hybrid}

To demonstrate that our gating mechanism provides benefits orthogonal to replay-based methods, we evaluate a variant that combines gates with Dark Experience Replay (DER). This hybrid approach maintains a small memory buffer (500 samples) and applies the DER++ loss while training only the gates and classifier, keeping the backbone frozen.\looseness=-1

Results show that Gates + DER achieves substantial improvements over gates alone. On PAMAP2, FA improves from 77.7\% to 84.3\%, while FM drops dramatically from 16.2\% to 6.1\%, a 62\% relative reduction in forgetting. Similar patterns emerge on DSA (FA: 78.7\% → 85.7\%, FM: 19.6\% → 9.6\%) and UCI-HAR (FA: 80.3\% → 85.0\%, FM: 14.5\% → 6.1\%). Importantly, this performance is competitive with standard DER++ while updating only 2.5\% of model parameters (gates + classifier vs. full backbone). This demonstrates that architectural gating and replay provide complementary mechanisms for preventing forgetting, and can be effectively combined when memory constraints permit storing past samples.\looseness=-1

However, maintaining a replay buffer may be impractical for on-device deployment scenarios. Storing raw sensor data on smartphones or wearables raises storage concerns. More critically, retaining historical movement patterns poses privacy risks, as such data can reveal sensitive information about users' daily routines and health conditions. For these reasons, our replay-free gating approach remains the preferred choice for privacy-sensitive edge deployments, achieving strong performance without requiring any data retention.\looseness=-1

%% file: Tables/2_replay_free_tbl.tex
\begin{table*}[t]
\centering
\caption{Comparison with replay-free continual learning methods (mean \mbox{$\pm$} std over 10 permutations). FA: Final Accuracy (\%), FM: Forgetting Measure (\%), LA: Learning Accuracy (\%)}
\label{tab:replay_free}
\footnotesize
\setlength{\tabcolsep}{2.5pt}
\resizebox{\textwidth}{!}{%
\begin{tabular}{l|ccc|ccc|ccc|ccc}
\toprule
& \multicolumn{3}{c|}{\textbf{PAMAP2}} 
& \multicolumn{3}{c|}{\textbf{DSA}} 
& \multicolumn{3}{c|}{\textbf{UCI-HAR}} 
& \multicolumn{3}{c}{\textbf{RealWorld}} \\
& \multicolumn{3}{c|}{8 subj., 12 act.} 
& \multicolumn{3}{c|}{8 subj., 18 act.} 
& \multicolumn{3}{c|}{30 subj., 6 act.} 
& \multicolumn{3}{c}{15 subj., 8 act.} \\
\cmidrule(lr){2-4} \cmidrule(lr){5-7} \cmidrule(lr){8-10} \cmidrule(lr){11-13}
& FA $\uparrow$ & FM $\downarrow$ & LA $\uparrow$ & FA $\uparrow$ & FM $\downarrow$ & LA $\uparrow$ & FA $\uparrow$ & FM $\downarrow$ & LA $\uparrow$ & FA $\uparrow$ & FM $\downarrow$ & LA $\uparrow$ \\
\midrule
EWC         & $66.2{\pm}7.5$ & $29.8{\pm}7.5$ & $95.6{\pm}0.6$ & $68.5{\pm}5.2$ & $30.1{\pm}5.4$ & $98.5{\pm}0.3$ & $76.2{\pm}8.5$ & $22.3{\pm}8.8$ & $98.9{\pm}0.5$ & $56.4{\pm}8.1$ & $38.6{\pm}8.0$ & $89.9{\pm}0.7$ \\
LwF         & $64.8{\pm}10.0$ & $31.3{\pm}10.3$ & $96.8{\pm}0.5$ & $69.8{\pm}6.0$ & $29.2{\pm}5.6$ & $99.0{\pm}0.3$ & $74.7{\pm}11.3$ & $23.6{\pm}11.7$ & $98.4{\pm}0.7$ & $55.2{\pm}9.6$ & $39.5{\pm}9.8$ & $90.5{\pm}0.6$ \\
HAT         & $67.8{\pm}8.2$ & $28.5{\pm}8.4$ & $96.6{\pm}0.4$ & $70.2{\pm}5.8$ & $28.3{\pm}5.6$ & $99.1{\pm}0.2$ & $77.5{\pm}6.2$ & $21.0{\pm}6.4$ & $98.7{\pm}0.4$ & $57.9{\pm}7.0$ & $36.8{\pm}7.2$ & $90.6{\pm}0.5$ \\
\midrule
\textbf{Ours} & ${77.7{\pm}2.5}$ & ${16.2{\pm}2.6}$ & $93.8{\pm}0.4$ &
${78.7{\pm}1.7}$ & ${19.6{\pm}1.7}$ & $98.3{\pm}0.3$ &
${80.3{\pm}5.3}$ & ${14.5{\pm}5.4}$ & $94.9{\pm}1.2$ &
${65.6{\pm}3.2}$ & ${26.4{\pm}3.3}$ & $86.5{\pm}0.8$ \\
\midrule
Upper Bound & $92.8{\pm}0.9$ & $0.0{\pm}0.0$ & $92.8{\pm}0.9$ & $97.4{\pm}0.6$ & $0.0{\pm}0.0$ & $97.4{\pm}0.6$ & $94.1{\pm}0.6$ & $0.0{\pm}0.0$ & $94.1{\pm}0.6$ & $79.9{\pm}0.9$ & $0.0{\pm}0.0$ & $79.9{\pm}0.9$ \\
\bottomrule
\end{tabular}%
}
\end{table*}

%% file: Tables/3_replay_based_tbl.tex
\begin{table*}[t]
\centering
\caption{Comparison with replay-based methods (mean \mbox{$\pm$} std over 10 permutations). Params: percentage of model parameters updated per task. Buffer: number of stored samples.}\looseness=-1
\label{tab:replay}
\footnotesize
\setlength{\tabcolsep}{2.5pt}
\resizebox{\textwidth}{!}{%
\begin{tabular}{l|ccc|ccc|ccc|ccc}
\toprule
& \multicolumn{3}{c|}{\textbf{PAMAP2}} 
& \multicolumn{3}{c|}{\textbf{DSA}} 
& \multicolumn{3}{c|}{\textbf{UCI-HAR}} 
& \multicolumn{3}{c}{\textbf{RealWorld}} \\
\cmidrule(lr){2-4} \cmidrule(lr){5-7} \cmidrule(lr){8-10} \cmidrule(lr){11-13}
& FA $\uparrow$ & FM $\downarrow$ & LA $\uparrow$ & FA $\uparrow$ & FM $\downarrow$ & LA $\uparrow$ & FA $\uparrow$ & FM $\downarrow$ & LA $\uparrow$ & FA $\uparrow$ & FM $\downarrow$ & LA $\uparrow$ \\
\midrule
DER    & $88.5{\pm}0.6$ & $8.1{\pm}0.8$ & $95.6{\pm}0.4$ & $92.3{\pm}1.8$ & $6.9{\pm}1.8$ & $98.2{\pm}0.2$ & $89.5{\pm}1.9$ & $9.8{\pm}2.0$ & $98.4{\pm}0.2$ & $75.6{\pm}1.4$ & $16.3{\pm}1.5$ & $90.4{\pm}0.4$ \\
DER++  & ${90.1{\pm}1.6}$ & ${6.3{\pm}1.4}$ & $95.4{\pm}0.4$ & ${94.1{\pm}1.7}$ & ${5.0{\pm}1.6}$ & $98.0{\pm}0.2$ & ${90.5{\pm}2.6}$ & ${8.8{\pm}2.6}$ & $99.4{\pm}0.2$ & ${77.1{\pm}1.9}$ & ${14.4{\pm}1.8}$ & $90.7{\pm}0.3$ \\
\midrule
Ours   & $77.7{\pm}2.5$ & $16.2{\pm}2.6$ & $93.8{\pm}0.4$ & $78.7{\pm}1.7$ & $19.6{\pm}1.7$ & $98.3{\pm}0.3$ & $80.3{\pm}5.3$ & $14.5{\pm}5.4$ & $94.9{\pm}1.2$ & $65.6{\pm}3.2$ & $26.4{\pm}3.3$ & $86.5{\pm}0.8$ \\
\bottomrule
\end{tabular}%
}
\end{table*}

%% file: Tables/4_deployment_tbl.tex
\begin{table*}[t]
\centering
\caption{Efficiency, storage, and deployment cost. Trainable parameters and retained storage are exact; inference FLOPs are identical across methods (same forward pass); training-step time is reported relative to full-backbone training, and peak training memory is measured on the same GPU. Retained storage is given per dataset at a 500-sample buffer.}
\label{tab:efficiency}
\footnotesize
\setlength{\tabcolsep}{4pt}
\begin{tabular}{l|cc|cc|cccc}
\toprule
& Trainable & Train step & Inf.\ FLOPs & Peak train & \multicolumn{4}{c}{Retained storage (MB)} \\
Method & params & (rel.) & /window & mem.\ (rel.) & PAMAP2 & DSA & UCI & RealWorld \\
\midrule
Ours (gates, frozen)     & 0.19 M (2.5\%) & 0.60× & 1.67 G & $\sim$0.74× & 0 & 0 & 0 & 0 \\
Full-backbone (EWC, HAT) & 7.44 M (100\%) & 1.0×  & 1.67 G & 1.0×        & 0 & 0 & 0 & 0 \\
Replay (DER, DER++)      & 7.44 M (100\%) & 1.0×  & 1.67 G & 1.0×        & 10.3 & 17.2 & 2.3 & 10.3 \\
Gates + DER              & 0.19 M (2.5\%) & 0.60× & 1.67 G & $\sim$0.74× & 10.3 & 17.2 & 2.3 & 10.3 \\
\bottomrule
\end{tabular}

\vspace{2pt}
{\footnotesize Per-sample buffer cost: 21.2 / 35.2 / 4.7 / 21.1 KB for PAMAP2 / DSA / UCI-HAR / RealWorld. EWC and HAT retain no samples but store auxiliary state (Fisher matrix / task masks).}
\end{table*}

%% file: Tables/5_backbone_freezing_tbl.tex
\begin{table*}[t]
\centering
\caption{Effect of backbone freezing and gating mechanisms across HAR benchmarks (mean \mbox{$\pm$} std over 10 permutations).}\looseness=-1
\label{tab:freezing_gates}
\footnotesize
\setlength{\tabcolsep}{2.5pt}
\resizebox{\textwidth}{!}{%
\begin{tabular}{l|ccc|ccc|ccc|ccc}
\toprule
& \multicolumn{3}{c|}{\textbf{PAMAP2}} 
& \multicolumn{3}{c|}{\textbf{DSA}} 
& \multicolumn{3}{c|}{\textbf{UCI-HAR}} 
& \multicolumn{3}{c}{\textbf{RealWorld}} \\
& \multicolumn{3}{c|}{8 subj., 12 act.} 
& \multicolumn{3}{c|}{8 subj., 18 act.} 
& \multicolumn{3}{c|}{30 subj., 6 act.} 
& \multicolumn{3}{c}{15 subj., 8 act.} \\
\cmidrule(lr){2-4} \cmidrule(lr){5-7} \cmidrule(lr){8-10} \cmidrule(lr){11-13}
& FA $\uparrow$ & FM $\downarrow$ & LA $\uparrow$ & FA $\uparrow$ & FM $\downarrow$ & LA $\uparrow$ & FA $\uparrow$ & FM $\downarrow$ & LA $\uparrow$ & FA $\uparrow$ & FM $\downarrow$ & LA $\uparrow$ \\\midrule
\multicolumn{13}{l}{\textit{Trainable Backbone}} \\
Base & $56.7{\pm}8.9$ & $39.7{\pm}8.9$ & $96.5{\pm}0.3$ & $68.7{\pm}3.9$ & $30.4{\pm}4.1$ & $99.1{\pm}0.3$ & $75.6{\pm}8.6$ & $23.3{\pm}8.8$ & $98.9{\pm}0.6$ & $51.4{\pm}7.8$ & $40.9{\pm}7.9$ & $90.6{\pm}0.4$ \\
Base + Gates & $64.9{\pm}10.0$ & $31.3{\pm}10.3$ & $96.2{\pm}0.5$ & $69.8{\pm}6.0$ & $29.2{\pm}5.9$ & $99.0{\pm}0.3$ & $74.7{\pm}11.3$ & $23.7{\pm}11.7$ & $98.4{\pm}0.7$ & $56.3{\pm}9.1$ & $36.6{\pm}9.3$ & $90.3{\pm}0.5$ \\
\midrule
\multicolumn{13}{l}{\textit{Frozen Backbone}} \\
Pretrained &
$76.5{\pm}4.0$ & $17.5{\pm}4.0$ & $93.9{\pm}0.4$ &
$75.7{\pm}2.7$ & $22.1{\pm}2.7$ & $97.9{\pm}0.2$ &
$80.1{\pm}9.4$ & $13.9{\pm}9.5$ & $94.0{\pm}1.5$ &
$64.4{\pm}3.6$ & $27.3{\pm}3.6$ & $86.2{\pm}0.9$ \\
\textbf{Pretrained + Gates (Ours)} &
${77.7{\pm}2.5}$ & ${16.2{\pm}2.6}$ & $93.8{\pm}0.4$ &
${78.7{\pm}1.7}$ & ${19.6{\pm}1.7}$ & $98.3{\pm}0.3$ &
${80.3{\pm}5.3}$ & ${14.5{\pm}5.4}$ & $94.9{\pm}1.2$ &
${65.6{\pm}3.2}$ & ${26.4{\pm}3.3}$ & $86.5{\pm}0.8$ \\
\bottomrule
\end{tabular}%
}
\end{table*}

%% file: Tables/6_stacked_layer_tbl.tex
\begin{table*}[t]
\centering
\caption{Comparison of gating (selection) versus stacked fully connected layers (generation) across benchmarks (mean \mbox{$\pm$} std over 10 permutations). Stacked layers are 512→512 with ReLU and dropout.}\looseness=-1
\label{tab:gates_vs_stacked}
\footnotesize
\setlength{\tabcolsep}{2.5pt}
\resizebox{\textwidth}{!}{%
\begin{tabular}{l|ccc|ccc|ccc|ccc}
\toprule
& \multicolumn{3}{c|}{\textbf{PAMAP2}} 
& \multicolumn{3}{c|}{\textbf{DSA}} 
& \multicolumn{3}{c|}{\textbf{UCI-HAR}} 
& \multicolumn{3}{c}{\textbf{RealWorld}} \\
& \multicolumn{3}{c|}{8 subj., 12 act.} 
& \multicolumn{3}{c|}{8 subj., 18 act.} 
& \multicolumn{3}{c|}{30 subj., 6 act.} 
& \multicolumn{3}{c}{15 subj., 8 act.} \\
\cmidrule(lr){2-4} \cmidrule(lr){5-7} \cmidrule(lr){8-10} \cmidrule(lr){11-13}
& FA & FM & LA & FA & FM & LA & FA & FM & LA & FA & FM & LA \\
\midrule
\multicolumn{13}{l}{\textit{Frozen Backbone with Stacked Adaptation Layers}} \\
No stacked layers   & $76.5{\pm}4.0$ & $17.5{\pm}4.0$ & $93.9{\pm}0.4$ &
$75.7{\pm}2.7$ & $22.1{\pm}2.7$ & $97.9{\pm}0.2$ &
$80.1{\pm}9.4$ & $13.9{\pm}9.5$ & $94.0{\pm}1.5$ & $64.4{\pm}3.6$ & $27.3{\pm}3.6$ & $86.2{\pm}0.9$ \\
1 stacked layer  & $74.2{\pm}4.8$ & $20.1{\pm}4.9$ & $94.3{\pm}0.4$ & $74.8{\pm}3.2$ & $23.5{\pm}3.3$ & $98.2{\pm}0.3$ & $78.9{\pm}5.8$ & $16.2{\pm}5.8$ & $95.1{\pm}1.1$ & $62.8{\pm}4.1$ & $29.6{\pm}4.2$ & $87.0{\pm}0.8$ \\
2 stacked layers  & $72.1{\pm}6.2$ & $22.7{\pm}6.3$ & $94.8{\pm}0.4$ & $73.9{\pm}4.3$ & $24.6{\pm}4.5$ & $98.4{\pm}0.3$ & $78.1{\pm}6.1$ & $18.0{\pm}6.1$ & $96.1{\pm}1.1$ & $61.5{\pm}5.2$ & $31.1{\pm}5.3$ & $87.6{\pm}0.8$ \\
3 stacked layers  & $70.1{\pm}7.7$ & $25.3{\pm}7.8$ & $95.3{\pm}0.4$ & $73.0{\pm}5.3$ & $25.6{\pm}5.5$ & $98.6{\pm}0.3$ & $77.3{\pm}6.4$ & $19.7{\pm}6.4$ & $97.0{\pm}1.2$ & $60.2{\pm}6.4$ & $32.9{\pm}6.5$ & $88.1{\pm}0.9$ \\
\midrule
\textbf{Gates (Ours)} &
${77.7{\pm}2.5}$ & ${16.2{\pm}2.6}$ & $93.8{\pm}0.4$ &
${78.7{\pm}1.7}$ & ${19.6{\pm}1.7}$ & $98.3{\pm}0.3$ &
${80.3{\pm}5.3}$ & ${14.5{\pm}5.4}$ & $94.9{\pm}1.2$ &
${65.6{\pm}3.2}$ & ${26.4{\pm}3.3}$ & $86.5{\pm}0.8$ \\
\bottomrule
\end{tabular}%
}
\end{table*}

%% file: Tables/7_KD_tbl.tex
\begin{table*}[t]
\centering
\caption{Effect of knowledge distillation, task-aware gates, and experience replay (mean \mbox{$\pm$} std over permutations).}\looseness=-1
\label{tab:extensions}
\footnotesize
\setlength{\tabcolsep}{2.5pt}
\resizebox{\textwidth}{!}{%
\begin{tabular}{l|ccc|ccc|ccc|ccc}
\toprule
& \multicolumn{3}{c|}{\textbf{PAMAP2}} 
& \multicolumn{3}{c|}{\textbf{DSA}} 
& \multicolumn{3}{c|}{\textbf{UCI-HAR}} 
& \multicolumn{3}{c}{\textbf{RealWorld}} \\
& \multicolumn{3}{c|}{8 subj., 12 act.} 
& \multicolumn{3}{c|}{8 subj., 18 act.} 
& \multicolumn{3}{c|}{30 subj., 6 act.} 
& \multicolumn{3}{c}{15 subj., 8 act.} \\
\cmidrule(lr){2-4} \cmidrule(lr){5-7} \cmidrule(lr){8-10} \cmidrule(lr){11-13}
& FA & FM & LA & FA & FM & LA & FA & FM & LA & FA & FM & LA \\
\midrule
\multicolumn{13}{l}{\textit{Task-Free Gates (Single Shared Gates)}} \\
Without KD (Ours) &
${77.7{\pm}2.5}$ & ${16.2{\pm}2.6}$ & $93.8{\pm}0.4$ &
${78.7{\pm}1.7}$ & ${19.6{\pm}1.7}$ & $98.3{\pm}0.3$ &
${80.3{\pm}5.3}$ & ${14.5{\pm}5.4}$ & $94.9{\pm}1.2$ &
${65.6{\pm}3.2}$ & ${26.4{\pm}3.3}$ & $86.5{\pm}0.8$ \\
With KD           & $77.5{\pm}3.5$ & $12.7{\pm}4.0$ & $90.2{\pm}1.0$ & $78.1{\pm}3.6$ & $17.3{\pm}3.0$ & $95.4{\pm}0.7$ & $82.1{\pm}4.4$ & $8.9{\pm}5.0$ & $90.9{\pm}2.6$ & $65.2{\pm}3.9$ & $22.9{\pm}3.7$ & $83.4{\pm}1.4$ \\
\midrule
\multicolumn{13}{l}{\textit{Task-Aware Gates (Separate Gates per Task)}} \\
Task-Aware        & $78.9{\pm}4.4$ & $12.3{\pm}4.2$ & $91.2{\pm}0.6$ & $80.1{\pm}3.0$ & $16.4{\pm}2.9$ & $96.5{\pm}0.5$ & $82.9{\pm}3.0$ & $8.9{\pm}1.8$ & $91.8{\pm}2.3$ & $66.8{\pm}3.3$ & $21.9{\pm}3.0$ & $84.3{\pm}1.1$ \\
\midrule
\multicolumn{13}{l}{\textit{Gates + Experience Replay (500 sample buffer)}} \\
Gates + Replay    & $84.0{\pm}2.1$ & $6.6{\pm}1.6$ & $90.6{\pm}1.0$ & $85.0{\pm}3.0$ & $10.2{\pm}2.4$ & $95.2{\pm}1.1$ & $83.7{\pm}5.9$ & $7.1{\pm}4.5$ & $90.8{\pm}3.6$ & $71.1{\pm}3.5$ & $16.2{\pm}2.9$ & $83.5{\pm}1.9$ \\
Gates + DER       & ${84.3{\pm}2.0}$ & ${6.1{\pm}1.5}$ & $90.4{\pm}0.9$ & ${85.7{\pm}2.6}$ & ${9.6{\pm}2.1}$ & $95.3{\pm}1.0$ & ${85.0{\pm}5.0}$ & ${6.1{\pm}2.6}$ & $91.1{\pm}4.0$ & ${71.8{\pm}3.1}$ & ${15.3{\pm}2.4}$ & $83.7{\pm}2.0$ \\
\bottomrule
\end{tabular}%
}
\end{table*}

%% file: Tables/8_gate_sweep_tbl.tex
\begin{table*}[t]
\centering
\caption{Effect of the number of channel gates inserted into the frozen backbone (mean \mbox{$\pm$} std over 10 permutations). Gates are added to the first $N$ stacks, from $0$ (frozen classifier only) to $4$ (all stacks, our default). $0$ and $4$ gates correspond to the \textit{Pretrained} and \textit{Pretrained + Gates (Ours)} rows of Table~\ref{tab:freezing_gates}.}\looseness=-1
\label{tab:gate_sweep}
\footnotesize
\setlength{\tabcolsep}{2.5pt}
\resizebox{\textwidth}{!}{%
\begin{tabular}{l|ccc|ccc|ccc|ccc}
\toprule
& \multicolumn{3}{c|}{\textbf{PAMAP2}} 
& \multicolumn{3}{c|}{\textbf{DSA}} 
& \multicolumn{3}{c|}{\textbf{UCI-HAR}} 
& \multicolumn{3}{c}{\textbf{RealWorld}} \\
& \multicolumn{3}{c|}{8 subj., 12 act.} 
& \multicolumn{3}{c|}{8 subj., 18 act.} 
& \multicolumn{3}{c|}{30 subj., 6 act.} 
& \multicolumn{3}{c}{15 subj., 8 act.} \\
\cmidrule(lr){2-4} \cmidrule(lr){5-7} \cmidrule(lr){8-10} \cmidrule(lr){11-13}
& FA $\uparrow$ & FM $\downarrow$ & LA $\uparrow$ & FA $\uparrow$ & FM $\downarrow$ & LA $\uparrow$ & FA $\uparrow$ & FM $\downarrow$ & LA $\uparrow$ & FA $\uparrow$ & FM $\downarrow$ & LA $\uparrow$ \\
\midrule
0 gates (classifier only) &
$76.5{\pm}4.0$ & $17.5{\pm}4.0$ & $93.9{\pm}0.4$ &
$75.7{\pm}2.7$ & $22.1{\pm}2.7$ & $97.9{\pm}0.2$ &
$80.1{\pm}9.4$ & $13.9{\pm}9.5$ & $94.0{\pm}1.5$ &
$64.4{\pm}3.6$ & $27.3{\pm}3.6$ & $86.2{\pm}0.9$ \\
1 gate &
$75.3{\pm}2.4$ & $15.2{\pm}2.7$ & $90.5{\pm}0.8$ &
$70.5{\pm}4.0$ & $25.4{\pm}3.6$ & $95.9{\pm}0.8$ &
$79.5{\pm}3.6$ & $11.7{\pm}4.2$ & $91.2{\pm}2.8$ &
$62.8{\pm}3.8$ & $28.9{\pm}3.7$ & $85.4{\pm}1.0$ \\
2 gates &
$72.6{\pm}3.2$ & $18.2{\pm}3.0$ & $90.8{\pm}0.7$ &
$71.5{\pm}2.7$ & $24.8{\pm}2.2$ & $96.3{\pm}0.8$ &
$78.7{\pm}6.6$ & $14.0{\pm}5.7$ & $92.8{\pm}2.3$ &
$61.9{\pm}4.5$ & $29.6{\pm}4.4$ & $86.0{\pm}1.0$ \\
3 gates &
$77.0{\pm}2.5$ & $13.8{\pm}2.1$ & $90.8{\pm}0.6$ &
$71.9{\pm}6.0$ & $24.0{\pm}5.8$ & $95.9{\pm}0.3$ &
$79.8{\pm}6.0$ & $12.2{\pm}5.8$ & $92.0{\pm}3.0$ &
$64.1{\pm}4.1$ & $27.1{\pm}4.0$ & $86.1{\pm}0.9$ \\
\midrule
\textbf{4 gates (Ours)} &
${77.7{\pm}2.5}$ & ${16.2{\pm}2.6}$ & $93.8{\pm}0.4$ &
${78.7{\pm}1.7}$ & ${19.6{\pm}1.7}$ & $98.3{\pm}0.3$ &
${80.3{\pm}5.3}$ & ${14.5{\pm}5.4}$ & $94.9{\pm}1.2$ &
${65.6{\pm}3.2}$ & ${26.4{\pm}3.3}$ & $86.5{\pm}0.8$ \\
\bottomrule
\end{tabular}%
}
\end{table*}

%% file: Sections/07_conclusion.tex
\section{Conclusion and Future Work}

We presented a parameter-efficient continual learning framework for human activity recognition that combines frozen pretrained backbones with lightweight channel-wise gating mechanisms. Our key insight is that adaptation should operate through feature \emph{selection} rather than feature \emph{generation}: by restricting learned transformations to diagonal scaling of existing features, we preserve the geometric structure of pretrained representations while enabling subject-specific modulation. This bounded form of adaptation limits representational drift and reduces catastrophic forgetting without requiring replay buffers or task-specific regularization. These properties make our approach well-suited for on-device deployment in IoT applications such as wearable health monitoring, where privacy constraints preclude data transmission and memory limitations prohibit replay buffers.\looseness=-1

Figure~\ref{fig:forgetting_conclusion} illustrates the effectiveness of our approach on the same PAMAP2 subject sequence shown in Figure~\ref{fig:forgetting_intro}, demonstrating substantially reduced forgetting. Extensive experiments across foure HAR benchmarks further validated our approach. On PAMAP2, freezing the backbone reduced forgetting from 39.7\% to 17.5\%, while adding gates further improved final accuracy to 77.7\% with only 16.2\% forgetting, outperforming regularization-based methods (EWC, LwF) and architecture-based methods (HAT) while training less than 2\% of model parameters. We also demonstrated that gating is complementary to replay: combining gates with DER achieved 84.3\% final accuracy and 6.1\% forgetting, competitive with full-model replay methods at a fraction of the parameter cost.\looseness=-1

\begin{figure}
    \centering
    \includegraphics[width=\linewidth]{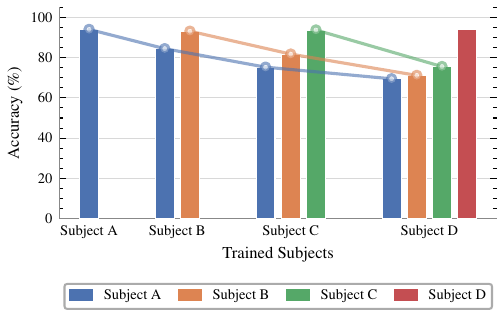}
    \caption{Per-subject accuracy evolution with our gating approach on PAMAP2. Unlike the baseline in Figure~\ref{fig:forgetting_intro}, which dropped to 40\% on Subject~1 after four subjects, our method retains 69.5\% accuracy while achieving 93.8\% learning accuracy on new subjects.\looseness=-1}
    \label{fig:forgetting_conclusion}
\end{figure}

Our approach has limitations that suggest directions for future work. The reliance on a pretrained backbone assumes access to a sufficiently diverse source dataset; performance may degrade when source and target domains differ substantially. Additionally, while task-free gates perform well, explicitly modeling task structure could further reduce forgetting in scenarios where task boundaries are available. Future work could explore adaptive gating architectures that automatically determine the optimal number and placement of gates, as well as extensions to other domain-incremental settings beyond HAR.\looseness=-1